\documentclass[10pt,twocolumn,letterpaper]{article}

\usepackage{iccv}
\usepackage{times}
\usepackage{epsfig}
\usepackage{graphicx}
\usepackage{amsmath}
\usepackage{amssymb}

\usepackage{amssymb,amsmath,amsthm}

\newtheorem{theorem}{Theorem}

\usepackage{mathtools} 

\usepackage{amsthm}
\usepackage{amsmath}
\theoremstyle{definition}
\newtheorem{definition}{Definition}[section]
\DeclareMathOperator*{\argmin}{argmin}
\usepackage{graphicx}
\usepackage[labelformat=simple]{subcaption}

\usepackage{algorithm,algorithmic,amsmath}
\usepackage[export]{adjustbox}
\usepackage{caption}
\usepackage{subcaption}
\usepackage{tabularx}
\usepackage{wrapfig}
\usepackage{dblfloatfix}
\usepackage{amsmath,amsfonts,amssymb}
\usepackage{bm}
\usepackage{comment}

\usepackage{titletoc}


\usepackage[pagebackref=true,breaklinks=true,colorlinks,bookmarks=false]{hyperref}



\iccvfinalcopy 


\ificcvfinal\pagestyle{empty}\fi

\usepackage[accsupp]{axessibility}  

\begin{document}

\title{Explaining Local, Global, And Higher-Order Interactions In Deep Learning}


\author{Samuel Lerman \qquad Charles Venuto \qquad Henry Kautz \qquad Chenliang Xu\\
University of Rochester\\
{\tt\small \{slerman@ur.,charles.venuto@chet.,kautz@cs.,chenliang.xu@\}rochester.edu} 
}

\maketitle
\ificcvfinal\thispagestyle{empty}\fi

\begin{abstract}
   We present a simple yet highly generalizable method for explaining interacting parts within a neural network’s reasoning process. First, we design an algorithm based on cross derivatives for computing statistical interaction effects between individual features, which is generalized to both 2-way and higher-order (3-way or more) interactions. We present results side by side with a weight-based attribution technique, corroborating that cross derivatives are a superior metric for both 2-way and higher-order interaction detection. Moreover, we extend the use of cross derivatives as an explanatory device in neural networks to the computer vision setting by expanding Grad-CAM, a popular gradient-based explanatory tool for CNNs, to the higher order. While Grad-CAM can only explain the importance of individual objects in images, our method, which we call Taylor-CAM, can explain a neural network’s relational reasoning across multiple objects. We show the success of our explanations both qualitatively and quantitatively, including with a user study. We will release all code as a tool package to facilitate explainable deep learning.
\end{abstract}

\section{Introduction}

\begin{figure}
\centering
\includegraphics[width=\linewidth]{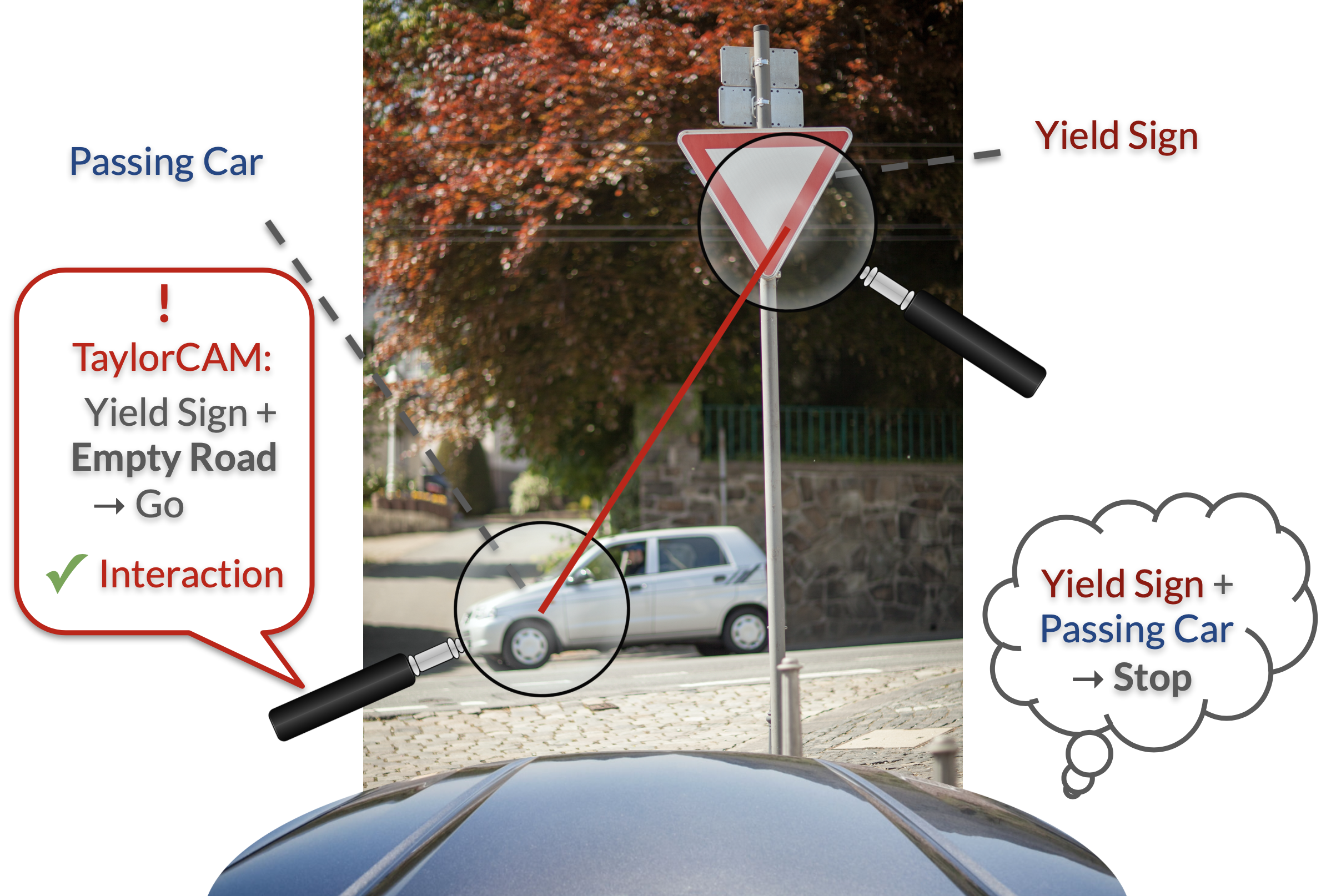}
\caption{An automated driver decides whether to ``stop'' or ``go.'' Here, the decision cannot be explained by individual factors alone, but by the interaction between the yield sign and the passing car. Taylor-CAM identifies interactions by considering how changing one object affects the significance of another, such as how changing a passing car into an empty road would change the meaning of the yield sign from ``stop'' to ``go.''}
\end{figure}

The universe is made up of myriad interacting parts. To truly understand complex systems and processes, it is not enough to view their functions as an amalgamation of independent contributors. Rather, they are a complex web of inter-operating influences. For much of the past, explainable deep learning has concerned itself with identifying important features, feature vectors, and isolated concepts. However, in the real world, humans intuitively understand that decisions are consequences of complex relations, not merely extrapolated from rankings of singular phenomena.

To explain an agent’s strategy in any task — be it computer vision, natural language processing, physics, biomedicine, reinforcement learning, autonomous driving, economics, or future forecasting — is imprecise without an interactional approach. 
However, interactional strategies are not always summarizable by heatmaps \cite{gradcampp, gradcam, heat4, heat5, cam} or ordered rankings \cite{inputgradients3, inputgradients1, inputgradients2}; and they often require an understanding of many dependencies --- complex dependencies, such as those between higher-level concepts (\textit{e.g.} vector representations in deep neural networks \cite{relationalreasoning, rrnn, RN, rdrl}) --- not just single-dimensional features as typically explored in the statistical interaction effects literature \cite{eberle, janizek, tsang2, NID}.
In light of all of this, we propose a number of contributions towards explaining interactions in deep learning:

\textbf{T-NID}, an algorithm for statistical interaction effects that outperforms recent state-of-the-art baselines with both pairwise and higher-order interactions. Interaction effects are a fundamental notion in statistics \cite{ie1}. We make this computation tractable by translating local interaction effects into global interaction effects via representative samples and employing a simple subsampling heuristic. 

\textbf{Taylor-CAM}, an explanatory tool that extends Grad-CAM \cite{gradcam}, which assigns importances to feature vectors based on input gradients, by generalizing it to the 2-way and higher-order setting using the same formalism of interaction effects as for T-NID. This method is demonstrated on multi-object detection and relational reasoning in visual question-answering (VQA).

\textbf{Visualizations} of Taylor-CAM's explanations that enable a human cohort to reverse engineer questions in relational VQA without knowing the answers and interpret relational reasoning better than with existing explanatory tools like Grad-CAM and GLIDER \cite{tsang2} from just a convolutional neural network's (CNN) feature maps. 

\section{Related Work}

\textbf{In Deep Learning} \quad
Recently, there have been several attempts to compute statistical interactions with deep learning. Neural Interaction Detection (NID) \cite{NID} used neural network weights to interpret interactions, observing that interactions occur at nonlinear activations in the first hidden layer of an MLP. Like our approach T-NID, \cite{bayes} used gradient information to compute statistical interaction effects. However, they relied on Bayesian neural networks, required averaging a high number of Hessians, and only computed global interaction effects, not focusing on local or higher-order interactions. \cite{eberle} used cross derivatives between single features to explain interactions in deep similarity models, whereas we use an adaptation of Grad-CAM to demonstrate explainability in a more general computer vision setting. \cite{autoint} relied on self attention \cite{attention} to compute a measure analogous to non-emergent interaction effects and apply this to an analysis in the biomedical domain. 
Higher-order interactions have been considered throughout biomedicine, particularly for understanding gene interactions \cite{gene4, caruana, gene2, gene1, gene3}.

Cui et al.~\cite{bayes} applied their approach to a toy MNIST dataset consisting of a fixed set of feature vectors such that they could compute global interaction effects, but they mapped those feature vectors to single neurons and computed standard interaction effects between those mapped neurons. The limitation of this approach is that it cannot be used to explain local phenomena, which is traditionally what is of interest in computer vision, NLP, and other areas where multidimensional feature vectors are used.

\cite{janizek} and \cite{tsang2}, like our substitution of ReLU with GELU, substitute ReLU with Softplus in order to induce differentiability. The latter, like our work, translate local interaction effects to global interaction effects by aggregating across representative samples. While they use a random batch, we use a small subset of common aggregates. 
While our Taylor-CAM formulation is expressly adapted from Grad-CAM for intuitively explaining feature vectors in CNNs, \cite{janizek} derive their formulation from integrated gradients and \cite{tsang2, tsang} directly use cross partials.

\textbf{Individual Importances} \quad
\cite{inputgradients3, inputgradients1, inputgradients2} used input gradients to explain the reasoning of a neural network. \cite{cam} did so with class activation maps. Grad-CAM \cite{gradcam} and Grad-CAM++ \cite{gradcampp} combined both approaches to localize important feature vectors in computer vision with class activation maps and gradients, visualized by heatmaps. Similar to us, \cite{montavon} used Taylor decomposition to explain neural network decisions, but only for main effects, not interactions. 

\textbf{Relational Reasoning} \quad
We also connect interaction effects with relational reasoning, which has received increased attention in deep learning \cite{relationalreasoning, rrnn, RN, rdrl}, and use Taylor-CAM to interpret the reasoning process of Relation Networks \cite{RN}. While most past works have mainly focused on explaining individual factors of a neural network's predictions, the weights in multi-head dot product attention \cite{attention} could be interpreted as relational explanations for neural networks that include MHDPA in their architecture \cite{autoint}. In contrast, Taylor-CAM is architecture agnostic and can explain decisions unique to each output dimension directly from gradient information.

Unlike other works, we expressly derived Taylor-CAM for the purpose of explaining interactions between higher level representations, such as feature maps from a CNN, which standardly represent objects in computer vision (rather than using raw RGB pixels). 
As Grad-CAM is built on projected feature vectors in addition to gradients, so is our higher-order extension w.r.t. cross derivatives to explain interactions rather than isolated phenomena. 

\section{Statistical Interaction Effects}
\label{statiessec}
We define statistical interaction effect analogous to \cite{formaldef}:

\theoremstyle{definition}
\begin{definition}{\textbf{Interaction Effect}}
\label{iedef}
An interaction effect $\bm{\mathit{IE}}_{1,...,\ell}$ between variables $\bm{x}_1, ..., \bm{x}_\ell \in \bm{x}$ on a function $F(\bm{x})$ with inputs $\bm{x}$ is measured as:
\begin{equation}
\bm{\mathit{IE}}_{1, ..., \ell} = \frac{\partial^\ell F(\bm{x})}{\partial \bm{x}_1 \dotsm \partial \bm{x}_\ell}.
\end{equation}
\end{definition}
%
In plain English, an interaction effect is how much the meaning of one variable changes for a unit change in another variable. This change is reflected by the cross partial derivative. ``Change'' is an intuitive measure for interaction. From the earlier example, given a representation of a yield sign and an oncoming car, \textit{changing} the representation of the oncoming car into a representation of an empty road also changes the meaning of the yield sign from ``stop'' to ``go.'' For a more formal example, consider $F(\bm{x}) = \bm{x}_1sin(\bm{x}_2) + cos(\bm{x}_3)$. $F$ consists of an interaction between $\bm{x}_1$ and $\bm{x}_2$ for some $\bm{x}$ since $\partial^2 F(\bm{x}) / (\partial \bm{x}_1 \partial \bm{x}_2)$ is nonzero. However, $\bm{x}_3$ does not belong to an interaction since any cross derivative w.r.t. $\bm{x}_3$ is zero. 


\textbf{Adapt to Neural Networks} \quad Substituting $F$ with a trained neural network, we can compute the local interaction effects for a datapoint up to order $\ell$ as long as the neural network $F$ is $\ell$-times differentiable. In classification, softmax ensures this to be the case. In regression, we substitute ReLUs with Gaussian-error rectified linear units (GELUs), which have been shown comparable in performance \cite{gelu}. Otherwise, Definition \ref{iedef} affords the computation of interaction effects for arbitrary neural network
architectures.

\textbf{Translate Local Effects to Global Effects} \quad 
Often
in statistics, there is greater interest in computing global interaction effects, statistics that generalize across all datapoints. Similarly, this need may be found in analyzing scene graphs, object co-currency, and contextual information \cite{co-oc2, co-oc1, co-oc3}. In tandem with our work, \cite{bayes} converted local pairwise interaction effects to global pairwise interaction effects by averaging a set of representative samples retrieved via k-means clustering, in effect dividing the dataset by Euclidean distance and computing the global average from the centroids. We will similarly average representative local interaction effects in order to compute a global summary, but we will use a simpler and more efficient technique. In our case, efficiency is of more concern because computing higher-order interaction effects requires the computation of higher-order derivatives, which for many samples can become intractable. 

To translate local interaction effects into global 
ones 
at any order, we sample representative samples that have a wide range over the dataset and that are potentially meaningful. We choose the samples that are closest to a subset of common aggregates, including mean, median, min, max, and mode. As well as a random sample for good measure. Likewise, we used L2 distance to measure closeness. In addition, we considered different ways to aggregate the interaction effects of these samples. Again, namely mean, median, min, max, or mode. We ran a wide sweep of the complete power set of these potential samples and aggregates to find which combination performed best on a wide array of synthetic datasets distinct from those we trained on selected from prior works \cite{ieprior3, ieprior2, ieprior1, NID}, chosen to test for various types of interactions. Results of this power sweep are reported in the \textit{Appendix}. We ended up using the mean interaction effect of the samples closest to the mean, minimum, and mode of all samples, as well as a random sample.

\textbf{Improve Efficiency} \quad Another heuristic for efficiency that we employed was subsampling the interactions that would be computed. Naturally, testing for every combination up to order $\ell$ would be very expensive. Every double, every triple, every quadruple, etc. --- the problem grows combinatorially. We were able to mitigate this to a degree by taking advantage of the property of statistical interaction effects that \textit{an $\ell$-way interaction can only exist if all its corresponding ($\ell$ - 1)-interactions exist} \cite{ieprior1}. In turn, we were able to reduce the search space by only selecting non-redundant combinations of the $k$ interactions from the previous order whose interaction effects were highest, beginning with using every combination up to order $o$ and then subsampling the top $k$ for every order thereafter. 

Our complete algorithm, which we call Taylor-Neural Interaction Detection (T-NID) due to the higher-order derivatives, is described in pseudocode in the \textit{Appendix}.

Finally, we need to make a point about the sign of the resulting cross partial derivatives. A positive value indicates change in the positive direction; negative, negative. Since in regression we are interested in the overall effect of an interaction and are agnostic to the direction, we take the squared value of the cross-partial as our measure of interaction effect. In contrast, for classification, we use the sign --- positive or negative --- corresponding to the class of interest. 
And for multi-class classification, we take $F$ to be the network corresponding to the class output of interest, 
and use its squared cross partial derivatives.


\section{Taylor-CAM}

To this point, we have generalized our computation of interaction effects to the local, global, and higher-order setting, but we have not yet considered the case where features are multidimensional, as is the case in higher-level deep neural network representations.

Explaining the influence of feature vectors is common in computer vision and interpreting CNNs. However, we have illustrated with multiple examples why a precise explanation of a model's decisions requires an explanation of its interacting components, not just 
singular entities.

\subsection{Intuition}
\label{intuitionsec}

For arbitrary objects in the computer vision setting, a cross derivative alone is not sufficient. Besides the obvious reason that such objects are not represented by singular features but by multidimensional feature vectors learned by a CNN, it is also because fundamentally a cross derivative measures changes of changes. More formally, a cross derivative $\frac{\partial^2 F}{\partial x \partial y}$ measures the effect of a unit change of $x$ on the effect on $F$ of a unit change of $y$. When reasoning about visual relations, it is convenient to think of dependencies between objects that inform a decision, such as the dependency between a yield sign and a passing car in informing an automated driver's decision to ``stop'' or ``go.'' \textit{Changing} the passing car into another object, such as merely an empty road, would on its own change the neural network's interpretation of the yield sign from meaning ``stop'' to meaning ``go,'' even while keeping the yield sign fixed and unchanged --- yet a cross derivative only measures the effect of changing both. To account for this, instead of naively using cross derivatives, we measure how much changing one object would change the \textit{importance} of another object to a neural network's decision, \textit{e.g.}, how changing the yield sign into a speed limit sign would change the passing car's importance or how changing the passing car into a gush of leaves would change the yield sign's importance with regards to the decision of whether to ``stop'' or ``go'' --- even when not necessarily both are changed.

Given car $C$, yield sign $Y$, and binary decision ``go'' $G$, this intuition may be summarized mathematically as:

\begin{equation}
\label{intuition}
S_{Y,C} = \partial \mathit{IMP}(Y, G)\big/\partial C \text{,}
\end{equation}
where $\mathit{S}_{\mathit{Y},\mathit{C}}$ represents the interaction salience between the yield sign and passing car, and $\mathit{IMP}(\mathit{Y}, \mathit{G})$ represents the importance of the yield sign to the neural network's decision to go or stop. Fortunately, the importance of individual objects in computer vision is the characteristic problem of the explanatory tool Grad-CAM \cite{gradcampp, gradcam, cam}, which we use to derive our method. We use the term \textit{interaction salience} due to 
deviation from interaction effects in 
Definition \ref{iedef}. 

\subsection{Methodology}

Suppose we have an $\ell$-times differentiable function $F: \mathbb{R}^{n, d} \to \mathbb{ R}$, which will stand for our neural network, where $\ell \geq 2$. $F$ takes in matrix $\bm{x}$ consisting 
of $n$ feature vectors $\bm{x}_1, ..., \bm{x}_n \in \mathbb{R}^{d}$ of dimension $d$. So $\bm{x}_1, ..., \bm{x}_n$ are just feature vectors produced by a CNN and each one is associated with an image region. $F$ is the portion of the network downstream of 
these
feature vectors.


\textbf{Quantify Importance} \quad To fill $\mathit{IMP}$ in Equation \ref{intuition}, we turn to class activation maps (CAMs) \cite{cam}. However, as observed by the solution of \cite{gradcam}, to find out how a class activation map increases the class's likelihood, we would like to know how its features contribute to the output, which we can do with their gradients. We can estimate the global effect by summing the gradient of each feature vector $\bm{x}_k$ and weighing the sum to each CAM. This amounts exactly to Grad-CAM \cite{gradcam}:
\begin{equation}
\label{imp}
\begin{split}
\mathit{IMP}(\bm{x}_i, F(\bm{x})) &= \mathit{GradCAM}(\bm{x}_i, F(\bm{x})) \\&= \sum\limits_{p} \bm{x}_{ip} \sum\limits_{k} \frac{\partial F(\bm{x})}{\partial \bm{x}_{kp}}
\end{split}.
\end{equation}
\textbf{Generalize Grad-CAM to Compute Interactions} \quad Now that we have the importance of a feature vector (via essentially Grad-CAM), we can formulate $\bm{S}_{ij}$, the interaction salience between feature vectors $\bm{x}_i$ and $\bm{x}_j$, by substituting Equation \ref{imp} into \ref{intuition} and summing the dimensions as follows:
\begin{equation}
\label{sub}
\bm{S}_{ij} = \sum\limits_{m} \partial \left[\sum\limits_{p} \bm{x}_{ip} \sum\limits_{k} \frac{\partial F(\bm{x})}{\partial \bm{x}_{kp}}\right] \bigg/ \partial \bm{x}_{jm}.
\end{equation}
\textbf{Merge with Statistical Interaction Effects} \quad Finally, we bring this to an easy-to-compute form by realizing that the partial derivative in the denominator $\partial \bm{x}_j$ can be computed together with the partial derivative in the numerator. We also square the salience because a change of importance in either direction would be significant. We note that the following is a generalization of Grad-CAM that reduces elegantly to a modified interaction effects Definition \ref{iedef}:
\begin{equation}
\label{hessiancam}
\begin{split}
\bm{S}_{ij}^2 &= \Bigg(\sum\limits_{m} \sum\limits_{p} \bm{x}_{ip} \sum\limits_{k} \frac{\partial^2 F(\bm{x})}{\partial \bm{x}_{kp} \partial \bm{x}_{jm}}\Bigg)^2 \\&= \Bigg(\sum\limits_{m, p, k} \bm{x}_{ip}\bm{\mathit{IE}}_{kp,jm}\Bigg)^2
\end{split}.
\end{equation}
In tests, we found setting $k = i$ in Equations \ref{imp} - \ref{hessiancam} without the global sum over $k$ to perform just as well and often better, perhaps because the local gradients in Equation \ref{imp} more precisely correspond to features. We call Equation \ref{hessiancam} Hessian-CAM. Hessian-CAM may be further differentiated with respect to a cross partial $\partial \bm{x}_q$ to get a $3$-way interaction salience, and that can be further differentiated up to any order $\ell$. Thus, we name this Taylor-CAM, a higher-order generalization of Grad-CAM, where Grad-CAM (or a close variant) is the special case $\ell = 1$ and Hessian-CAM is the special case $\ell = 2$. 

Note that interaction saliences are conditional. The interaction salience of feature $\bm{x}_i$ on feature $\bm{x}_j$ is not necessarily the same as that of $\bm{x}_j$ on $\bm{x}_i$. Interaction salience $\bm{S}_{ij}$ represents the influence of $\bm{x}_i$ on the importance of $\bm{x}_j$. Interaction salience $\bm{S}_{ijk...}$ represents the influence of $\bm{x}_i$ on the interaction salience of interaction $\bm{x}_j, \bm{x}_k, ...$. To address this, we sum the mutual pairs, \textit{e.g.}, $\bm{S}_{ij} + \bm{S}_{ji}$, although we note that we did so only to make the presentation clearer and not because it is required. For many interpretation tasks, understanding that the meaning of the yield sign depends on the car, but the meaning of the car does not depend on the yield sign is crucial to getting the most precise understanding. Computing the mutual pairs does not require re-computation of any derivatives, and can be achieved easily by permuting the resulting interaction saliences and summing them. Lastly, we zero out the diagonals and redundant grid cells of the resulting interaction saliences to only consider interactions between non-redundant feature vectors.
 
 
 \subsection{Limitations}


One limitation of Taylor-CAM, much like Grad-CAM, is that ``importance'' is based on contribution to the output,
so if two different objects have the same contribution to the output, 
then changing one into the other would be considered meaningless, 
and so the interactions might not be identified. 
Suppose we have the setup from Sort-Of-CLEVR \cite{clevr}, a relational reasoning task. Here, we have an image with an assortment of shapes of different colors and a relational question related to that image. An example of this limitation is when an agent is asked, ``What is the color of the circle furthest from the pink square?'' If the furthest circle is blue, and the second furthest is also blue, then changing the furthest into a square does not meaningfully impact the pink square’s contribution to the output, as determined by Grad-CAM, since the answer to the question would be unchanged (blue). Grad-CAM++ \cite{gradcampp} may hold an insight as to how to address this, via even-higher order derivatives. 

Another limitation is that ``change'' is 
measured locally, as derivatives do not account for non-local rates of change. This means that Taylor-CAM, like other deep learning explanatory tools, depends on 
local regions of representations. 

Lastly, of course, is the time complexity of computing higher-order derivatives. Higher-order differentiation has become increasingly more accessible with Taylor-mode autograd methods like JAX \cite{autograd1} and libraries like the new Pytorch functional autograd API \cite{autograd2}, yet remains a challenge as the order grows. For Hessian-CAM, we had no trouble computing 2nd-order derivatives of Relation Networks using Pytorch and CPU memory. None of our individual explanations required more than a few minutes to compute on a CPU, excluding neural network training. 

\section{Experiments}
\subsection{Statistical Interaction Effects} 
\begin{table*}
  \centering
            \caption{AUC scores for pairwise interaction effects. Top-1 scores are bolded.}
            \begin{tabular}{lcccccccc} 
            \hline & ANOVA & HierLasso & RuleFit & AG & NID \cite{NID} & NID MLP-M \cite{NID} & GLIDER \cite{tsang2} & T-NID  \\
            \hline $F_{1}(\mathbf{x})$ & $0.992$ & $\mathbf{1.00}$ & $0.754$ & $\mathbf{1}$ & $0.970$ & $0.995 \pm 4.4 \mathrm{e}-3$ & $0.973 \pm 0.01$ & $0.962 \pm 0.022$ \\ 
            $F_{2}(\mathbf{x})$ & $0.468$ & $0.636$ & $0.698$ & $0.88$ & $0.79$ & $0.85 \pm 3.9 \mathrm{e}-2$ & $0.84 \pm 0.097$ & $ \mathbf{0.885} \pm 0.039$ \\ 
            $F_{3}(\mathbf{x})$ & $0.657$ & $0.556$ & $0.815$ & $\mathbf{1}$ & $0.999$ & $\mathbf{1} \pm 0.0$ & $0.919 \pm 0.075$ & $0.999 \pm 0.001$ \\
            $F_{4}(\mathbf{x})$ & $0.563$ & $0.634$ & $0.689$ & $\mathbf{0.999}$ & $0.85$ & $0.996 \pm 4.7 \mathrm{e}-3$ & $0.951 \pm 0.073$ & $ 0.998 \pm 0.003$ \\ 
            $F_{5}(\mathbf{x})$ & $0.544$ & $0.625$ & $0.797$ & $0.67 $ & $\mathbf{1}$ & $\mathbf{1} \pm 0.0$ & $0.997 \pm 0.008$ & $0.991 \pm 0.016$ \\
            $F_{6}(\mathbf{x})$ & $0.780$ & $0.730$ & $0.811$ & $0.64$ & $\mathbf{0.98}$ & $0.70 \pm 4.8 \mathrm{e}-2$ & $0.767 \pm 0.033$ & $0.954 \pm 0.026$ \\
            $F_{7}(\mathbf{x})$ & $0.726$ & $0.571$ & $0.666$ & $0.81$ & $0.84$ & $0.82 \pm 2.2 \mathrm{e}-2$ & $0.751 \pm 0.207$ & $\mathbf{0.98} \pm 0.021$ \\ 
            $F_{8}(\mathbf{x})$ & $0.929$ & $0.958$ & $0.946$ & $0.937$ & $0.989$ & $0.989 \pm 4.5 \mathrm{e}-3$ & $0.998 \pm 0.005$ & $ \mathbf{1.0} \pm 0.0$ \\ 
            $F_{9}(\mathbf{x})$ & $0.783$ & $0.681$ & $0.584$ & $0.808$ & $0.83$ & $0.83 \pm 3.7 \mathrm{e}-2$ & $0.754 \pm 0.098$ & $\mathbf{0.98} \pm 0.023$ \\ 
            $F_{10}(\mathbf{x})$ & $0.765$ & $0.583$ & $0.876$ & $\mathbf{1}$ & $0.995$ & $0.99 \pm 2.1 \mathrm{e}-2$ & $0.974 \pm 0.027$ & $\mathbf{1.0} \pm 0.0$ \\
            \hline Average & $0.721$ & $0.698$ & $0.764$ & $0.87$ & ${0 . 9 2}$ & ${0 . 9 2} \pm 1.8 \mathrm{e}-2$ & $0.892 \pm 0.063$ & $\mathbf{0.975}\pm 0.015$ \\
            \hline
            \end{tabular}
            
  \label{2wayies}
            \end{table*}
            We evaluate T-NID’s ability to rank interactions on the suite of synthetic functions 
            proposed by \cite{ieprior3, ieprior2, ieprior1, NID}, 
            which were “designed to have a mixture of pairwise and higher-order interactions, with varying order, strength, nonlinearity, and overlap” \cite{NID}. 
            These are available to see in the \textit{Appendix} and in Table 1 of \cite{NID}. 

\textbf{Pairwise Interactions} \quad For pairwise interaction effects (see Table \ref{2wayies}), we report or reproduce the experiments of \cite{NID} verbatim, measuring AUC scores between predicted interaction rankings and ground truths. A pair $x_i, x_j$
is considered an interaction either by itself or when it is a subset of a higher-order interaction,
as in \cite{ieprior2, ieprior1}. Included for comparison are benchmarks from various statistical and machine learning methods \cite{ieprior1, ie2, tsang2, NID, ie1}. NID \cite{NID} uses an interpretation of the weights from a standard MLP to detect interactions, whereas NID + MLP-M uses an MLP with additional univariate networks summed at the output to discourage modeling of main effects and false spurious interactions. 
GLIDER \cite{tsang2} is a recent cross-partial method that induces higher-order differentiability with Softplus.

In contrast, T-NID uses only a standard MLP and GELU activations. GELU demonstrably performs better.
Unlike NID, 
we found 
no 
benefit from MLP-M or sparsity regularization.
Despite the simpler architecture, T-NID is immune to some of the deficits of NID and NID + MLP-M. 
T-NID is able to distinguish main effects and spurious interactions in $F_2$ and $F_4$, and
while NID + MLP-M modeled spurious main effects in the $\{8, 9, 10\}$ interaction of $F_6$ and GLIDER appears to struggle with this as well, T-NID recognizes it as an interaction. All around, T-NID performs on par or better than NID and GLIDER at computing pairwise statistical interaction effects on these synthetic tasks.
\begin{table*}
  \centering
  \caption{AUC scores for higher-order $n$-way interaction effects}
\begin{tabular}{lcccccc}
                    \hline 
                    &
                    \multicolumn{2}{c}{3-Way Interactions} &
                    \multicolumn{2}{c}{4-Way Interactions} &
                   \multicolumn{2}{c}{5-Way Interactions}  \\
                  \cline{2-3}\cline{4-5}\cline{6-7}
                    & NID \cite{NID} & T-NID  & NID \cite{NID} & T-NID  & NID \cite{NID}  & T-NID  \\
            \hline Average & $0.08 \pm 0.013$ & $\textbf{0.76} \pm 0.07$ & $0.75 \pm 0.13$ & $\textbf{0.78} \pm 0.11$ & $0.92 \pm 0.06$ & $\textbf{0.97} \pm 0.05$ \\
                    \hline
                    \end{tabular}
  \label{nwayies}
                    \end{table*}
                    
                    \textbf{Higher-Order Interactions} \quad For higher-order interactions, we do not report AUC scores against the full ground truth, as that would grow combinatorially more expensive with higher orders. Since NID also extracts interactions one order at a time, we compare the AUC scores of NID and T-NID one order at a time and use ground truths from the union of their discovered interactions. That way, they can be assessed relative to one another, albeit not universally. In addition to the results reported in Table \ref{nwayies}, we tested many variants of architectures and report results with NID + MLP-M in the \textit{Appendix}. In all cases, the relative results were largely the same, with T-NID achieving the highest scores, except less so at $4$-way interactions when equipped with its own main effects network (MLP-M). Since any-order NID tends to find supersets much better than subsets, at $3$-way interactions, NID misses nearly all present interactions, whereas T-NID fares relatively well. Along with recent works \cite{bayes}, we have shown that cross derivatives are a promising metric for interaction attribution in DNNs.
                    
\subsection{Object Detection}

\begin{figure}
\centering
\begin{subfigure}{\linewidth}
\centering
\begin{subfigure}{\linewidth}
\begin{subfigure}{0.32\textwidth}
\includegraphics[width=\textwidth, frame]{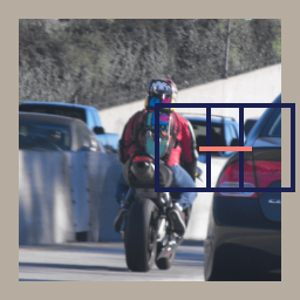}
\end{subfigure}
\hfill
\begin{subfigure}{0.32\textwidth}
\includegraphics[width=\textwidth, frame]{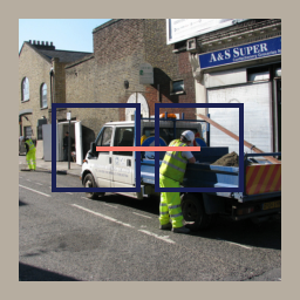}
\end{subfigure}
\hfill
\begin{subfigure}{0.32\textwidth}
\includegraphics[width=\textwidth, frame]{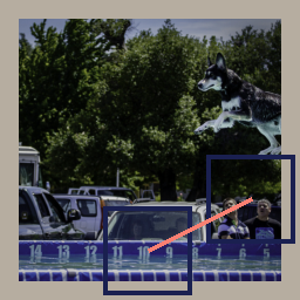}
\end{subfigure}
\caption{
Objects ``person'' and ``car'' are interacted to produce the output classification of whether both are present in the image in tandem.
}
\label{cocotask}
\end{subfigure}

\smallskip
\begin{subfigure}{\linewidth}
\begin{subfigure}{0.32\textwidth}
\includegraphics[width=\textwidth, frame]{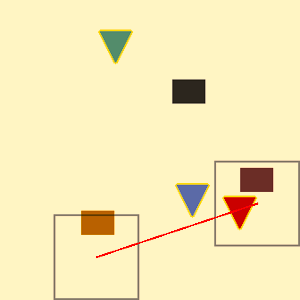}
\end{subfigure}
\hfill
\begin{subfigure}{0.32\textwidth}
\includegraphics[width=\textwidth, frame]{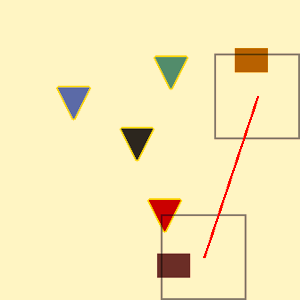}
\end{subfigure}
\hfill
\begin{subfigure}{0.32\textwidth}
\includegraphics[width=\textwidth, frame]{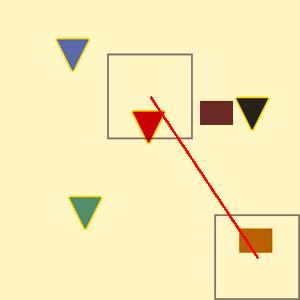}
\end{subfigure}
\end{subfigure}
\caption{Taylor-CAM interacts the yield sign (red triangle) with any present car (rectangle).}
\label{yieldgo}
\end{subfigure}
\vfill
\medskip
\begin{subfigure}{\linewidth}
\centering
\begin{subfigure}{0.49\textwidth}
\includegraphics[width=\textwidth, frame]{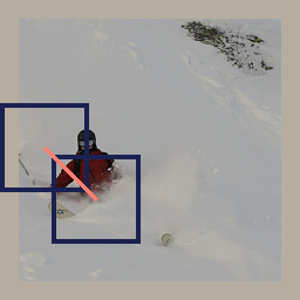}
\end{subfigure}
\hfill
\begin{subfigure}{.49\textwidth}
\centering
\includegraphics[width=\textwidth, frame]{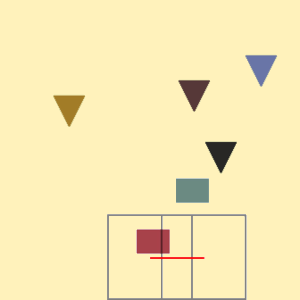}
\end{subfigure}
\caption{When no interactions present, Taylor-CAM's interactions intuitively are $0$ or occur primarily between adjacent regions as above.}
\label{selfinteracting}
\end{subfigure}
\caption{Top-1 bounding boxes generated by Taylor-CAM representing simple interactions in multi-object detection.}
\end{figure}

We ran two qualitative assessments of Taylor-CAM in multi-object detection. In both, the task was to identify whether a pair of objects were present in tandem. We tested the objects ``car'' and ``person'' in the COCO annotated-image dataset \cite{coco}, and we designed our own toy dataset consisting of cars (rectangles), signs (triangles), and a yield sign (red triangle) with labels ``go'' or ``stop.'' The COCO task suffered from model overfitting and lower test accuracy due to the limited pairwise data, but we still observed sensible explanations. 
Figure \ref{cocotask} shows such interactions assigned the highest interaction salience by Taylor-CAM. 


In the Yield-or-Go task, Taylor-CAM revealed two prediction strategies. The first is expected: the model interacts the yield sign (red triangle) with a car (rectangle), as seen in Figure \ref{yieldgo}, then predicts ``stop'' accordingly. In the second, the model interacts one car with all of the other cars. One would expect it to relate the car and the yield sign, but the model discovered that the problem can be solved by checking if (1) a car is present, and (2) a red car is not present. Since each object has a different color, (2) implies that a yield sign is present and thus to ``stop.'' 
Demystifying such reasoning strategies is a unique benefit of Taylor-CAM.

However, when the correct label is “go,” \textit{i.e.}, a car and yield sign are not present together, Taylor-CAM finds that the model rarely interacts anything, but rather either all interaction saliences are zero or objects interact with themselves (immediately adjacent regions) (Figure \ref{selfinteracting}). This self-interacting is an intuitive and convenient interpretation that Taylor-CAM provides in the lack of salient interactions. 

\subsection{Relational Reasoning}

\begin{figure}
\centering
\begin{subfigure}{\linewidth}
\centering
\begin{subfigure}{0.22\textwidth}
\includegraphics[width=\linewidth, frame]{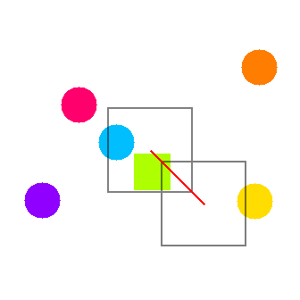}
\end{subfigure}
\hfill
\begin{subfigure}{0.22\textwidth}
\includegraphics[width=\linewidth, frame]{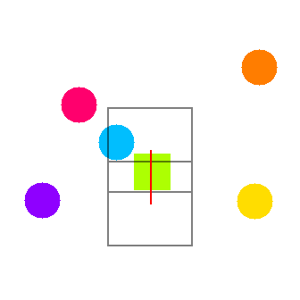}
\end{subfigure}
\hfill
\begin{subfigure}{0.22\textwidth}
\includegraphics[width=\linewidth, frame]{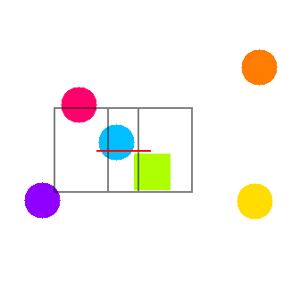}
\end{subfigure}
\hfill
\begin{subfigure}{0.22\textwidth}
\includegraphics[width=\linewidth, frame]{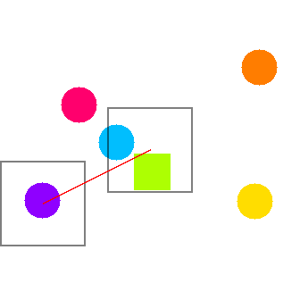}
\end{subfigure}%
\caption{Q: ``Which shape is closest to the green square?''}%
\medskip%
\begin{subfigure}{0.22\textwidth}
\includegraphics[width=\linewidth, frame]{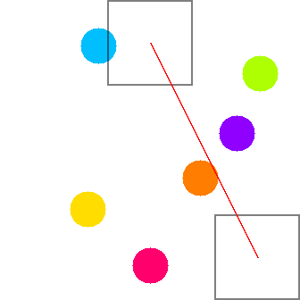}
\end{subfigure}
\hfill
\begin{subfigure}{0.22\textwidth}
\includegraphics[width=\linewidth, frame]{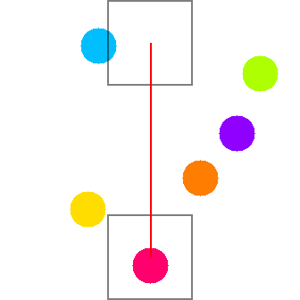}
\end{subfigure}
\hfill
\begin{subfigure}{0.22\textwidth}
\includegraphics[width=\linewidth, frame]{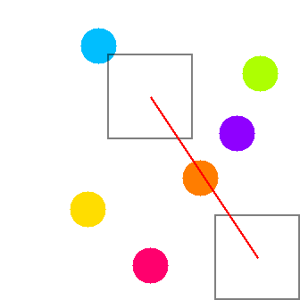}
\end{subfigure}
\hfill
\begin{subfigure}{0.22\textwidth}
\includegraphics[width=\linewidth, frame]{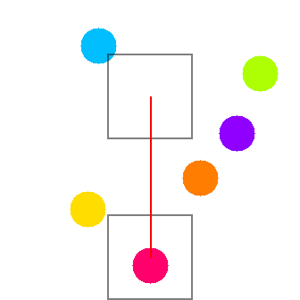}
\end{subfigure}%
\caption{Q: ``Which shape is furthest from the blue circle?''}%
\medskip%
\begin{subfigure}{0.22\textwidth}
\includegraphics[width=\linewidth, frame]{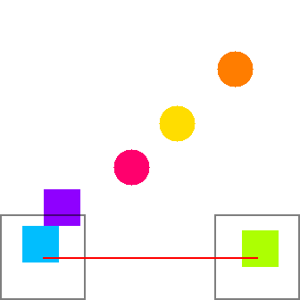}
\end{subfigure}
\hfill
\begin{subfigure}{0.22\textwidth}
\includegraphics[width=\linewidth, frame]{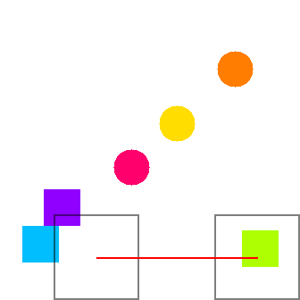}
\end{subfigure}
\hfill
\begin{subfigure}{0.22\textwidth}
\includegraphics[width=\linewidth, frame]{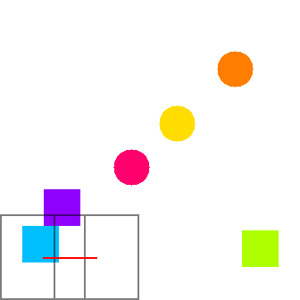}
\end{subfigure}
\hfill
\begin{subfigure}{0.22\textwidth}
\includegraphics[width=\linewidth, frame]{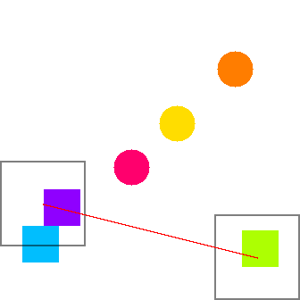}
\end{subfigure}%
\caption{Q: ``How many objects have shape of green object?''}%
\medskip%
\centering
\begin{subfigure}{0.22\textwidth}
\includegraphics[width=\linewidth, frame]{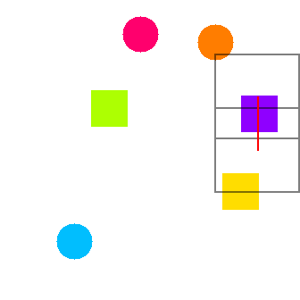}
\end{subfigure}
\hfill
\begin{subfigure}{0.22\textwidth}
\includegraphics[width=\linewidth, frame]{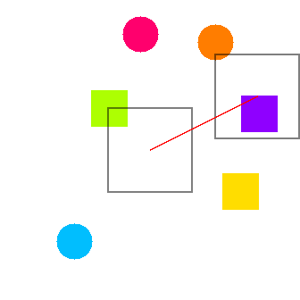}
\end{subfigure}
\hfill
\begin{subfigure}{0.22\textwidth}
\includegraphics[width=\linewidth, frame]{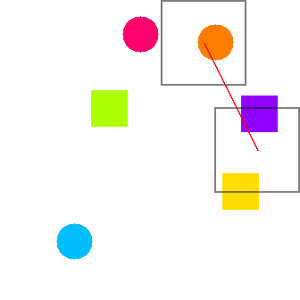}
\end{subfigure}
\hfill
\begin{subfigure}{0.22\textwidth}
\includegraphics[width=\linewidth, frame]{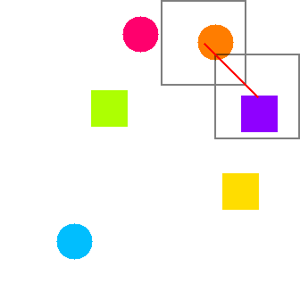}
\end{subfigure}%
\caption{Q: ``Which shape is closest to the purple square?''}%
\medskip%
\begin{subfigure}{0.22\textwidth}
\includegraphics[width=\linewidth, frame]{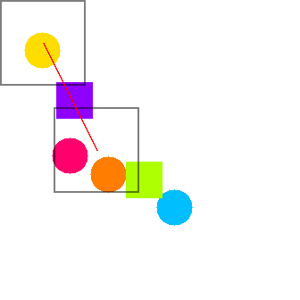}
\end{subfigure}
\hfill
\begin{subfigure}{0.22\textwidth}
\includegraphics[width=\linewidth, frame]{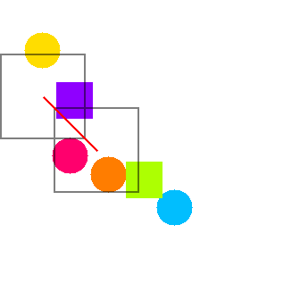}
\end{subfigure}
\hfill
\begin{subfigure}{0.22\textwidth}
\includegraphics[width=\linewidth, frame]{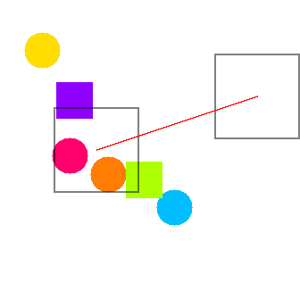}
\end{subfigure}
\hfill
\begin{subfigure}{0.22\textwidth}
\includegraphics[width=\linewidth, frame]{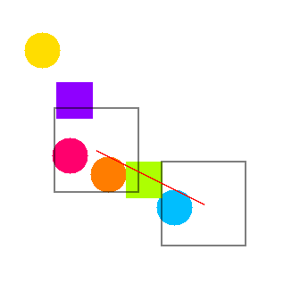}
\end{subfigure}
\caption{Q: ``Which shape is furthest from the pink circle?''}%
\medskip%
\begin{subfigure}{0.22\textwidth}
\includegraphics[width=\linewidth, frame]{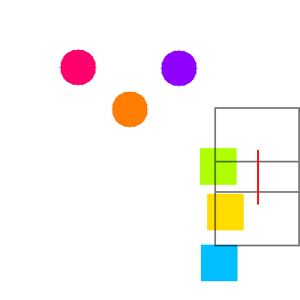}
\end{subfigure}%
\hfill
\begin{subfigure}{0.22\textwidth}
\includegraphics[width=\linewidth, frame]{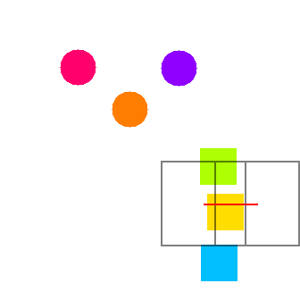}
\end{subfigure}
\hfill
\begin{subfigure}{0.22\textwidth}
\includegraphics[width=\linewidth, frame]{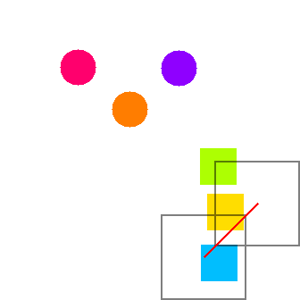}
\end{subfigure}
\hfill
\begin{subfigure}{0.22\textwidth}
\includegraphics[width=\linewidth, frame]{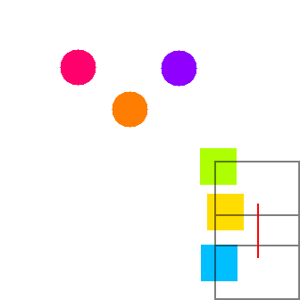}
\end{subfigure}%
\caption{Q: ``How many objects have shape of yellow object?''}%
\end{subfigure}%
\caption{Shown are the top 4 interactions identified from a Relation Network's predictions on 6 visual question-answering samples. 
The 
bounding
boxes 
proposed by Taylor-CAM may 
be interpreted as 
indicating
a relation.
We recommend testing yourself to see if you can guess (1) the object of interest and (2) the question being asked (\textit{closest}, \textit{furthest}, or \textit{same shape}), without looking at the caption. 
}
\label{soclevr}
\end{figure}
Sort-Of-CLEVR is a toy dataset for relational reasoning proposed by \cite{RN}. It is a less-computationally expensive 2D form of the CLEVR VQA dataset \cite{clevr} with a focus on relational questions. In our setup, these questions include distance 
and compare-\&-count tasks. To test Taylor-CAM's capacity to explain a neural network’s relational reasoning, we train a Relation Network \cite{RN} on Sort-Of-CLEVR and visualize its top
interactions in Figure \ref{soclevr}. Relation Networks are 
simple 
modules augmented to CNNs that enable relational reasoning between image regions. 

In Figure \ref{soclevr}, interacting regions are indicated by two bounding boxes, and the top $4$ interactions discovered by Taylor-CAM are shown per image. The input is an image of objects and a question about a particular \textit{object of interest} and its relation to another object, and the output is the answer to that question. Since these questions are relational in nature, this problem requires relational reasoning, which we hope Taylor-CAM can be suited to explain.
We invite the reader to use the discovered interactions in Figure \ref{soclevr} (as visualized by the bounding boxes) to try to deduce the objects of interest and questions for themselves before looking at the captions. For example, if the top $4$ interactions each consist of objects that are close to each other and if each interaction includes the pink square, one might guess that the question is ``Which shape is closest to the pink square?'' 


The 6 objects are ``blue'', ``purple'', ``pink'', ``yellow'', ``orange'', and ``green'' and the 3 questions are (1) ``Which shape is closest to the object of interest?'', (2) ``Which shape is furthest from the object of interest?'', and (3) ``How many objects have the same shape as the object of interest?''

While decisions are frequently relational \cite{relationalreasoning}, Grad-CAM is only designed to explain the importance of individual objects in isolation. We observed that Taylor-CAM affords much clearer explanations when decisions are relational. 

\begin{table}
\small
\centering
\caption{
Quantitative analysis on Sort-Of-CLEVR ($\%$)}
\resizebox{\linewidth}{!}{
\begin{tabular}{lccccccc}
                    \hline & Taylor-CAM & Grad-CAM\mbox{*} \cite{gradcam} & GLIDER \cite{tsang2}  \\
                    \hline Ques 1 & $\mathbf{90\%}$ & $35\%$ & $60\%$\\
            Ques 2 & $\mathbf{55\%}$ & $50\%$ & $35\%$\\
            Ques 3 & $\mathbf{60\%}$ & $40\%$ & $45\%$\\
                    \hline
                    \end{tabular}}
                    
\label{quant}
\end{table}

\textbf{Quantitative Performance} \quad 
To assess quantitatively, 20 images per question that were classified correctly by the model were randomly selected and annotated with their question's object of interest and answer-relevant objects. For example, for the question, ``What is the shape of the object closest to the green square?" the green square and the object that is closest to it are annotated. If Taylor-CAM's top-1 interaction (a pair of bounding boxes) intersects with the annotated pair, then it is counted as accurate for that image. Same with GLIDER. If Grad-CAM's top-2 saliences include the annotated pair, then it is counted as accurate for that image. Since Grad-CAM does not provide relational interpretations, we refer to this relational interpretation of Grad-CAM's saliences as Grad-CAM\mbox{*}. The bounding boxes in Figure \ref{tcgc} exemplify what a single salience looks like for Taylor-CAM and Grad-CAM respectively. Results of the quantitative analysis are reported in Table \ref{quant}.

\begin{figure}
\centering
\begin{subfigure}{.48\linewidth}
\includegraphics[width=.47\linewidth, frame]{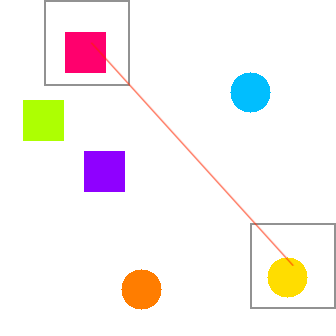}
\hfill
\includegraphics[width=.47\linewidth, frame]{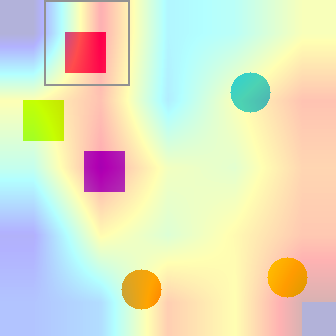}
\caption{Q: \textit{``Which shape is furthest from the pink square?''}}
\end{subfigure}
\hfill
\begin{subfigure}{.48\linewidth}
\includegraphics[width=.47\linewidth, frame]{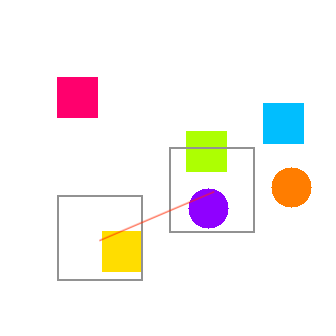}
\hfill
\includegraphics[width=.47\linewidth, frame]{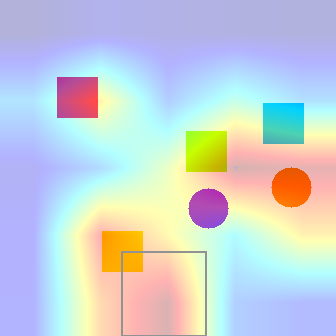}
\caption{Q: \textit{``Which shape is closest to the yellow square?''}}
\end{subfigure}
\caption{The bounding boxes show what a top-1 salience looks like for Taylor-CAM (on the left) and Grad-CAM (on the right) respectively. Taylor-CAM offers interpretable relational explanations from a single top-1 salience, whereas Grad-CAM depends on all saliences to produce a non-relational heatmap. 
}
\label{tcgc}
\end{figure}


\textbf{Qualitative Performance} \quad To measure Taylor-CAM's qualitative explainability, we selected a random batch of $15$ samples and their ordered interaction saliences, and conducted a small user study ($n=10$), asking each individual to guess (1) the object of interest and (2) the question being asked, from just looking at the top-4 ranked interaction visuals. 
Taylor-CAM achieves
strong explainability with better guess-accuracy than Grad-CAM and the recent GLIDER \cite{tsang2}. With Taylor-CAM, participants were able to reverse engineer questions in relational VQA from just looking at the visualized interactions.
We report a wide range of explainability across different colors and questions in Tables \ref{objects} and \ref{questions}. 
Due to random sampling, none of the $15$ sampled images for Grad-CAM included a purple object of interest, so it is marked ``N/A'' in Table \ref{objects}. 
\begin{table}
\small
\centering
\caption{User study \textbf{object of interest} accuracy ($\%$)}
\resizebox{\linewidth}{!}{
\begin{tabular}{lccc}
                    \hline & Grad-CAM \cite{gradcam} & GLIDER \cite{tsang2} & Taylor-CAM \\
                    \hline Green & $13.3\%$ & $33.3\%$ & \bm{$40\%$}\\
            Pink & $30\%$ & $10\%$ & \bm{$46.7\%$}\\
            Blue & $10\%$ & $22.2\%$ & \bm{$40\%$}\\
            Purple& N/A & \bm{$15\%$} & $10\%$ \\
            Orange & $3.3\%$ & $10\%$ & \bm{$15\%$}\\
            Yellow & $25\%$ & $16.7\%$ & \bm{$33.3\%$}\\
                    \hline
                    \end{tabular}}
\label{objects}
\end{table}
\begin{table}
\small
\centering
\caption{User study \textbf{question} accuracy ($\%$)}
\resizebox{\linewidth}{!}{
\begin{tabular}{lccc}
                    \hline & Grad-CAM \cite{gradcam} & GLIDER \cite{tsang2} & Taylor-CAM \\
                    \hline Ques 1 & $44\%$ & $38.9\%$ & \bm{$76\%$}\\
            Ques 2 & $14\%$ & $38.9\%$ & \bm{$55\%$}\\
            Ques 3 & $30\%$ & $23.8\%$ & \bm{$48.3\%$}\\
                    \hline
                    \end{tabular}
}
\label{questions}
\end{table}

While some Grad-CAM colors strongly outperform random guessing (pink and yellow), on average, people struggled guessing the object of interest with Grad-CAM. 
This is because Grad-CAM only explains which individual objects contribute to the output, which in 
relational VQA, is all of them with an equal importance assigned to the object of interest and any objects that are included in the question-answer, such as the furthest or nearest object. This results in uninterpretable and sometimes misleading visualizations, making it very hard 
to guess an object of interest 
from the visual only. Without knowing the object of interest, it is consequently much harder to guess the question asked.

Grad-CAM, GLIDER, and Taylor-CAM all did relatively well on question 1.
Closeness is easier to interpret with all three explanatory tools, since it is usually more visually apparent. However, we found question 2 (furthest distance) to be harder to interpret for Grad-CAM, perhaps because it is unclear what the object of interest is, with multiple ``far away'' objects of different relative proximity being ranked highly. For example, two objects that are far away from the object of interest might be close to each other, creating the false impression that the question is asking about closeness. Thus, without confidence regarding the object of interest and the interacting parts, we found ranked importances alone to be unintuitive and even misleading.

\subsection{Biomedical Application} 

We also 
applied T-NID to determine 
interactions in
the PPMI study dataset 
(\url{www.ppmi-info.org}).
Our analysis suggests that various measures previously thought to be unrelated should be considered together when predicting faster cognitive progression in 
Parkinson's disease.
Please see Appendix for 
details
in this domain.

\section{Architecture Configurations}

\textbf{T-NID} \quad For T-NID, we trained a GELU-activated multi-layer perceptron with hidden layer sizes 140, 100, 60, and 20 for 200 epochs with a learning rate of 0.003 using early stopping \cite{earlystop} with a patience of 10. Results were averaged across 10 trials. Input data was normalized by standard deviation. T-NID hyperparameters were set as $\ell = 5, o = 2, k = 10$.

\textbf{Taylor-CAM} \quad For our COCO \cite{coco} task, we used Pytorch's ResNet-50 \cite{resnet} pretrained on ImageNet \cite{imagenet}, except we replaced the global average pooling layer with an additional convolutional layer composed of 1024 out-channels, size 2 kernel, 2 stride, and 2 padding, followed by 3 hidden linear layers of size 512, 256, 64, because global average pooling yields no higher-order derivatives. For our Relation Network, we used an open source reference implementation, which can be found here: \url{https://github.com/kimhc6028/relational-networks}, since \cite{RN} did not release code to the public. We trained for 50 epochs.
\section{Conclusion}
With T-NID and Taylor-CAM, we have shown that input cross derivatives, combined with a few simple heuristics and intuitions, are a powerful tool for explaining interactions in deep learning. T-NID, using GELU activations, representative samples, and interaction subsampling, successfully ranks statistical interactions, outperforming NID. Meanwhile, Taylor-CAM generalizes Grad-CAM to the higher order and effectively explains interactions in object detection and relational reasoning, affording a user cohort the insight to guess questions in VQA from only seeing the top discovered visual interactions. 
Future work may explore localizing multi-modal interactions such as in audio-visual tasks, an agent's interactions in RL and robotics, and 
interactions between 
word 
embeddings
in NLP.
By making our code publicly available, we hope that these simple explanatory tools can be used and built upon to better explain the complex interoperating factors underlying neural network reasoning and the world.

\section{Acknowledgments}


Research reported in this publication was supported by the National Science Foundation (NSF) under Grants 1741472, 1813709, 1909912, and 1934962 and by the National Institutes of Health under Award Number P50NS108676. The content is solely the responsibility of the authors and does not necessarily represent the official views of the funding agents.


{\small
\bibliographystyle{ieee_fullname}
\bibliography{egbib}
}

\appendix
\section*{Appendix}

\addcontentsline{toc}{section}{Appendix}
\startcontents[sections]
\printcontents[sections]{l}{1}{\setcounter{tocdepth}{2}}



\section{Note On Terminology}

In colloquial terms, two things are said to interact when they depend on each other in some way. Similar to \cite{friedman}, this can be formalized as follows: 

\begin{definition}{\textbf{Entity Interaction}}
\label{coloquial}
Given an entity $e_1$ with attributes $(a_1, ..., a_n)$, an interaction exists with another entity $e_2$ with attributes $(b_1, ..., b_n)$ if some $a_i$ depends on some $b_j$ or some $b_j$ depends on some $a_i$. 
\end{definition}

Now we will define mathematical relation.

\begin{definition}{\textbf{Relation}}
\label{mathrel}
Given sets $A$ and $B$, the binary relation from $A$ to $B$ is a subset of the Cartesian product $A \times B$. 
\end{definition}

We would like to unify our colloquial understanding of interaction in Definition \ref{coloquial}, our mathematical definition of relation in Definition \ref{mathrel}, and our definition for statistical interaction effects in Definition 1 of the main paper.

To connect this to Definition 1, we will reframe features as entities with the following theorem:

\begin{theorem}
Given a function $F(\bm{x})$ and feature $x_i$, let entity $e_i$ consist of attributes $(x_i, F/\partial x_i)$. 
An interaction exists between $e_1$ and $e_2$ if there is a nonzero interaction effect between $x_1$ and $x_2$.
\end{theorem}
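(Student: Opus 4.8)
The plan is to make precise the single informal notion on which the statement rests—what it means for one attribute to \emph{depend on} another—and then to observe that the cross partial derivative is, by construction, exactly such a dependence. First I would fix the convention that an attribute $u$, viewed as a function of the inputs $\bm{x}$, depends on another attribute $v$ precisely when the partial derivative of $u$ with respect to the input variable underlying $v$ is nonzero (at some $\bm{x}$). This is the reading already adopted implicitly in the worked example following Definition~1 of the main paper, where $x_3$ is declared non-interacting exactly because every cross derivative through it vanishes, so the existential ``for some $\bm{x}$'' sense of ``nonzero'' carries over unchanged.

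With this convention in place, the argument is essentially a single line. The entities are $e_1 = (x_1,\ \partial F/\partial x_1)$ and $e_2 = (x_2,\ \partial F/\partial x_2)$, so the second attribute of $e_1$ is $a_2 = \partial F/\partial x_1$ and the first attribute of $e_2$ is $b_1 = x_2$. I would rewrite the hypothesis $\mathit{IE}_{1,2} = \partial^2 F/(\partial x_1\,\partial x_2) \neq 0$ as
\begin{equation*}
\frac{\partial a_2}{\partial b_1} \;=\; \frac{\partial}{\partial x_2}\!\left(\frac{\partial F}{\partial x_1}\right) \;=\; \frac{\partial^2 F}{\partial x_1\,\partial x_2} \;\neq\; 0,
\end{equation*}
which states directly that attribute $a_2$ of $e_1$ depends on attribute $b_1$ of $e_2$. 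That is exactly the clause ``some $a_i$ depends on some $b_j$'' of Definition~\ref{coloquial}, so an entity interaction exists and the implication is established.

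To round out the unification advertised just before the theorem, I would invoke the symmetry of mixed partials (Schwarz's theorem, available since $F$ is taken at least twice continuously differentiable, which the $\ell$-times-differentiability assumption with $\ell \geq 2$ supplies) to note that the same nonzero quantity equals $\partial(\partial F/\partial x_2)/\partial x_1 = \partial b_2/\partial a_1$. Hence the dependence also runs the other way, the induced relation is symmetric, and the set of interacting pairs can be exhibited as a genuine binary relation in the sense of Definition~\ref{mathrel}: the subset of the Cartesian product of entities on which the cross partial is nonzero.

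The main obstacle here is definitional rather than computational. The statement hinges entirely on committing to a formal meaning of ``depends on'' for attributes that are themselves functions of $\bm{x}$, and on matching that meaning to the sense of ``nonzero'' used for interaction effects; once this convention is pinned down and the mild regularity for Schwarz's theorem is noted, the content reduces to recognizing the cross partial as the partial derivative of one entity's gradient attribute with respect to the other entity's value attribute, and nothing further needs to be computed.
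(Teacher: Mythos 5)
Your argument is correct and follows essentially the same route as the paper's own one-line proof: both identify the nonzero cross partial $\partial^2 F/(\partial x_1\,\partial x_2)$ as witnessing that the gradient attribute $\partial F/\partial x_1$ of $e_1$ depends on the value attribute $x_2$ of $e_2$, which is exactly the "some $a_i$ depends on some $b_j$" clause of the entity-interaction definition. Your additions — pinning down the formal meaning of "depends on" and invoking Schwarz's theorem to get the symmetric dependence — are reasonable refinements of the same idea rather than a different approach.
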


\begin{proof}
If there is a nonzero interaction effect between $x_1$ and $x_2$, then $\partial^2 F(x) / (\partial x_1 \partial x_2) \neq 0$ for some input $\bm{x}$. Then $F/\partial x_1$ depends on $x_2$ and consequently, there exists an interaction between entities $e_1$ and $e_2$.
\end{proof}

We have shown that our statistical interaction implies an interaction according to our colloquial understanding. An interaction exists between $e_1$ and $e_2$ if (but not only if, since the change need not be local) $F(\bm{x})/(\partial x_1 \partial x_2) \neq 0$, meaning $F/\partial x_1$ depends on $x_2$. This is considered a binary relation between the two attributes, as all functions are relations, though not all relations are functions. Formally: given a function $F(\bm{x})$, a feature $x_i$, and entity $e_i$ consisting of attributes $(x_i, F/dx_i)$, if there is a nonzero interaction effect between $x_1$ and $x_2$, then a relation exists between the attributes of the two entities.

We have shown that, under this framing, an interaction effect is a relation, and if the interaction effect is nonzero, there must be a dependency/interaction between those entities. Since feature vectors in CNNs could be treated as entities \cite{RN, rdrl, rrnn}, and if one interprets their gradients on the output to be implicit attributes, computing interaction effects between CNN feature vectors is equivalent to identifying the colloquial interactions and relations described in this formulation.

This is trivially generalized to interactions/relations of higher orders. 

To summarize, a mathematical relation is implied by a colloquial interaction is implied by a statistical interaction, and this hierarchy can be formalized by regarding a feature $x_i$ as an entity whose attributes include its gradients with respect to the function of interest. Thus, we offer a simple, formal connection between our statistical interaction effects definition and mathematical relations, as well as an integration of both into the colloquial understanding of “interaction” as merely a dependency between two “things.”

\section{Representative Samples \& Aggregations}


\begin{table}[h]
  \centering
\begin{tabular}{lc}
\hline 
Aggregation Of Representative Samples & AUC Score\\
\hline Mean Of Mean-Min-Mode-Rand & 0.61825 \\
Mean Of Med-Min-Mode-Rand & 0.61825 \\
Mean Of Mean-Med-Min-Mode-Rand & 0.61775 \\
Mean Of Mean-Min-Max-Mode-Rand & 0.6155 \\
Mean Of Med-Min-Max-Mode-Rand & 0.6155 \\
Med Of Mean-Min-Mode-Rand & 0.61525 \\
Med Of Med-Min-Mode--Rand & 0.61525 \\
Mean Of Mean-Med-Min-Max-Mode-Rand & 0.61525 \\
Mean Of Mean-Min-Rand & 0.614 \\
Mean Of Med-Min-Rand & 0.614 \\
\hline
\end{tabular}
  \caption{Top average (across all orders) AUC scores for different aggregations of representative samples}
  \label{aggs}
\end{table}

Table \ref{aggs} displays the top 10 aggregations and representative samples discovered via our power sweep. 


\section{T-NID Algorithm}

Our complete T-NID is described in Algorithm \ref{tnid}. Note that each derivation of interaction effect using Definition 1 of the main paper for an interaction $I = \hat{I} \cup j$ of size $\hat{\ell}$ where $|\hat{I}| = \hat{\ell} - 1$ for sample $x$ can be derived as a single-order partial derivative $\partial \mathit{\bm{IE}}_{\hat{I}}/\partial x_j$ and does not need to be recomputed from the ground up.

\begin{algorithm}[h]
\textbf{Inputs} $\ell$-times differentiable trained neural network $F$, dataset $\bm{X}$ with $i$th sample features $\bm{X}_{i1}, ..., \bm{X}_{in}$, order $\ell$, orders without subsampling $o$, subsampling size $k$.  

\textbf{Outputs} Interaction effects $\bm{\mathit{IE}}_I$ for top estimated interactions $I \subseteq \{1, ..., n\}$, where $|I| \leq \ell$.
 \\ \\
Get representative samples:

\textbf{For} {$j$th $\mathit{aggregation}  \in$ mean, minimum, mode, random}
    
\hspace*{5mm} $c = \argmin_{i}{\left\lVert \bm{X}_i - \mathit{aggregation}(\bm{X}, axis=0)\right\rVert}$
    
\hspace*{5mm} $\bm{r}_{j} = \bm{X}_c$ 

For each representative sample:

\textbf{For} {$\bm{r}_{j} \in \bm{r}$} 

\hspace*{5mm}Compute all non-redundant partial derivatives up to order $o$:
        
\hspace*{5mm}\textbf{For} $I \subseteq \{1, ..., n\}$, where $|I| \leq o$ 

\hspace*{10mm} $I = \mathit{sort}(I)$

\hspace*{10mm}\textbf{If} $\bm{\mathit{IE}}^{(j)}_I$ uninitiated

\hspace*{15mm}\textbf{Initiate} $\bm{\mathit{IE}}^{(j)}_I$ according to Definition 1 of the main paper

\hspace*{5mm}Compute remaining partial derivatives up to order $\ell$ by subsampling top $k$ from previous orders:

\hspace*{5mm}\textbf{For} $\hat{\ell} \in o + 1, ..., \ell$

\hspace*{10mm}\textbf{For} $\hat{I} \in$ top $k$ argmax of $\bm{\mathit{IE}}^{(j)}_I$, where $|I| = \hat{\ell} - 1$ 

\hspace*{15mm}\textbf{For} $I \subseteq \{1, ..., n\}$, where $|I| = \ell$ and $\hat{I} \subset I$

\hspace*{20mm}\textbf{If} $\bm{\mathit{IE}}^{(j)}_I$ uninitiated

\hspace*{25mm}\textbf{Initiate} $\bm{\mathit{IE}}^{(j)}_I$ according to Definition 1 of the main paper

Take the mean interaction effects across representative samples:

\textbf{For} $I \subseteq \{1, ..., n\}$ if $\bm{\mathit{IE}}^{(j)}_I$ initiated for some $j$

\hspace*{5mm} $\bm{\mathit{IE}}_I = \mathit{mean}(\bm{\mathit{IE}}^{(j)}_I)$ for all $j$ where $\bm{\mathit{IE}}^{(j)}_I$ initiated

\textbf{Return} $\bm{\mathit{IE}}$

\caption{T-NID algorithm in pseudocode}
\label{tnid}
\end{algorithm}

\section{Test Suite Of Synthetic Functions}

The test-suite of synthetic functions used to evaluate T-NID may be found in Table \ref{syntheticfuncs}, courtesy of \cite{NID}.

\begin{table*}[h]
\setlength{\tabcolsep}{3pt}
  \centering
  \small
\begin{tabularx}{\textwidth}{cc}
\hline$F_{1}(\mathrm{x})$ & $\pi^{x_{1} x_{2}} \sqrt{2 x_{3}}-\sin ^{-1}\left(x_{4}\right)+\log \left(x_{3}+x_{5}\right)-\frac{x_{9}}{x_{10}} \sqrt{\frac{x_{7}}{x_{8}}}-x_{2} x_{7}$ \\
\hline $F_{2}(\mathrm{x})$ & $\pi^{x_{1} x_{2}} \sqrt{2\left|x_{3}\right|}-\sin ^{-1}\left(0.5 x_{4}\right)+\log \left(\left|x_{3}+x_{5}\right|+1\right)+\frac{x_{9}}{1+\left|x_{10}\right|} \sqrt{\frac{\left|x_{7}\right|}{1+\left|x_{8}\right|}-x_{2} x_{7}}$ \\
\hline $F_{3}(\mathrm{x})$ & $\exp \left|x_{1}-x_{2}\right|+\left|x_{2} x_{3}\right|-x_{3}^{2\left|x_{4}\right|}+\log \left(x_{4}^{2}+x_{5}^{2}+x_{7}^{2}+x_{8}^{2}\right)+x_{9}+\frac{1}{1+x_{10}^{2}}$ \\
\hline $F_{4}(\mathrm{x})$ & $\exp \left|x_{1}-x_{2}\right|+\left|x_{2} x_{3}\right|-x_{3}^{2\left|x_{4}\right|}+\left(x_{1} x_{4}\right)^{2}+\log \left(x_{4}^{2}+x_{5}^{2}+x_{7}^{2}+x_{8}^{2}\right)+x_{9}+\frac{1}{1+x_{10}^{2}}$ \\
\hline $F_{5}(\mathrm{x})$ & $\frac{1}{1+x_{1}^{2}+x_{2}^{2}+x_{3}^{2}}+\sqrt{ \left|x_{4}+x_{5}\right|}+\left|x_{6}+x_{7}\right|+x_{8} x_{9} x_{10}$ \\
\hline $F_{6}(\mathrm{x})$ & $\exp \left(\left|x_{1} x_{2}\right|+1\right)-\exp \left(\left|x_{3}+x_{4}\right|+1\right)+\cos \left(x_{5}+x_{6}-x_{8}\right)+\sqrt{x_{8}^{2}+x_{9}^{2}+x_{10}^{2}}$ \\
\hline $F_{7}(\mathrm{x})$ & $\left(\arctan \left(x_{1}\right)+\arctan \left(x_{2}\right)\right)^{2}+\max \left(x_{3} x_{4}+x_{6}, 0\right)-\frac{1}{1+\left(x_{4} x_{5} x_{6} x_{7} x_{8}\right)^{2}}+\left(\frac{\left|x_{7}\right|}{1+\left|x_{9}\right|}\right)^{5}+\sum_{i=1}^{10} x_{i}$ \\
\hline $F_{8}(\mathrm{x})$ & $x_{1} x_{2}+2^{x_{3}+x_{5}+x_{6}}+2^{x_{3}+x_{4}+x_{5}+x_{7}}+\sin \left(x_{7} \sin \left(x_{8}+x_{9}\right)\right)+\arccos \left(0.9 x_{10}\right)$ \\
\hline $F_{9}(\mathrm{x})$ & $\tanh \left(x_{1} x_{2}+x_{3} x_{4}\right) \sqrt{\left|x_{5}\right|}+\exp \left(x_{5}+x_{6}\right)+\log \left(\left(x_{6} x_{7} x_{8}\right)^{2}+1\right)+x_{9} x_{10}+\frac{1}{1+\left|x_{10}\right|}$ \\
\hline $F_{10}(\mathrm{x})$ & $\sinh \left(x_{1}+x_{2}\right)+\arccos \left(\tanh \left(x_{3}+x_{5}+x_{7}\right)\right)+\cos \left(x_{4}+x_{5}\right)+\sec \left(x_{7} x_{9}\right)$ \\
\hline
\end{tabularx}
  \caption{Synthetic test-suite functions}
  \label{syntheticfuncs}
\end{table*}

\section{Additional Architectures For $N$-Way Interactions}

Table \ref{mlpm} shows results for T-NID + MLP-M (T-NID using a neural network equipped with a main effects network as well as trained with sparsity regularization) and NID + MLP-M, the architecture used in \cite{NID}. 

\begin{table*}[h]
\vspace{1px}
\begin{minipage}{.1023\linewidth}
\centering
\resizebox{\linewidth}{!}{
\begin{tabular}{l}
                    \hline \\
                    \hline $F_{1}(\mathbf{x})$  \\
            $F_{2}(\mathbf{x})$ \\
            $F_{3}(\mathbf{x})$ \\
            $F_{4}(\mathbf{x})$ \\
            $F_{5}(\mathbf{x})$ \\
            $F_{6}(\mathbf{x})$ \\
            $F_{7}(\mathbf{x})$ \\
            $F_{8}(\mathbf{x})$ \\
            $F_{9}(\mathbf{x})$ \\
            $F_{10}(\mathbf{x})$ \\
            \hline average \\
                    \hline
                    \end{tabular}
}
\end{minipage}
\hspace{-10px}
\begin{minipage}{.31\linewidth}
\centering
\resizebox{\linewidth}{!}{
\begin{tabular}{cc}
                    \hline T-NID 3-Way & NID 3-Way \\
                    \hline $0.831 \pm 0.064$ & $0.122 \pm 0.028$ \\
            $0.629 \pm 0.165$ & $0.07 \pm 0.011$ \\
            $0.991 \pm 0.013$ & $0.095 \pm 0.008$ \\
            $0.993 \pm 0.007$ & $0.09 \pm 0.028$ \\
            $0.493 \pm 0.009$ & $0.035 \pm 0.005$ \\
            $0.103 \pm 0.025$ & $0.034 \pm 0.005$ \\
            $0.417 \pm 0.264$ & $0.156 \pm 0.031$ \\
            $1.0 \pm 0.0$ & $0.141 \pm 0.008$ \\
            $0.838 \pm 0.146$ & $0.113 \pm 0.01$ \\
            $1.0 \pm 0.0$ & $0.03 \pm 0.002$ \\
            \hline $0.73 \pm 0.069$ & $0.089 \pm 0.014$ \\
                    \hline
                    \end{tabular}
}
\end{minipage}
\hspace{-10px}
\begin{minipage}{.31\linewidth}
\centering
\resizebox{\linewidth}{!}{
\begin{tabular}{cc}
                    \hline T-NID 4-Way & NID 4-Way \\
                    \hline $0.777 \pm 0.389$ & $0.555 \pm 0.456$ \\
            $0.032 \pm 0.065$ & $0.185 \pm 0.37$ \\
            $0.997 \pm 0.006$ & $1.0 \pm 0.0$ \\
            $0.96 \pm 0.08$ & $0.996 \pm 0.009$ \\
            N/A & N/A \\
            N/A & N/A \\
            $0.363 \pm 0.322$ & $0.711 \pm 0.046$ \\
            $1.0 \pm 0.0$ & $0.994 \pm 0.008$ \\
            $0.859 \pm 0.068$ & $0.618 \pm 0.084$ \\
            N/A & N/A \\
            \hline $0.713 \pm 0.133$ & $0.723 \pm 0.139$ \\
                    \hline
                    \end{tabular}
}
\end{minipage}
\hspace{-10px}
\begin{minipage}{.31\linewidth}
\centering
\resizebox{\linewidth}{!}{
\begin{tabular}{cc}
                    \hline T-NID 5-Way & NID 5-Way \\
                    \hline N/A & N/A \\
            N/A & N/A \\
            N/A & N/A \\
            N/A & N/A \\
            N/A & N/A \\
            N/A & N/A \\
            $0.303 \pm 0.367$ & $0.536 \pm 0.453$ \\
            N/A & N/A \\
            $0.988 \pm 0.024$ & $0.549 \pm 0.452$ \\
            N/A & N/A \\
            \hline $0.646 \pm 0.200$ & $0.543 \pm 0.453$ \\
                    \hline
                    \end{tabular}
}
\end{minipage}
\caption{N-Way AUC scores for T-NID + MLP-M and NID + MLP-M, both using a main effects network and sparsity regularization, as described in \cite{NID}}
\label{mlpm}
\end{table*}

\begin{figure*}
\centering
\includegraphics[width=\linewidth]{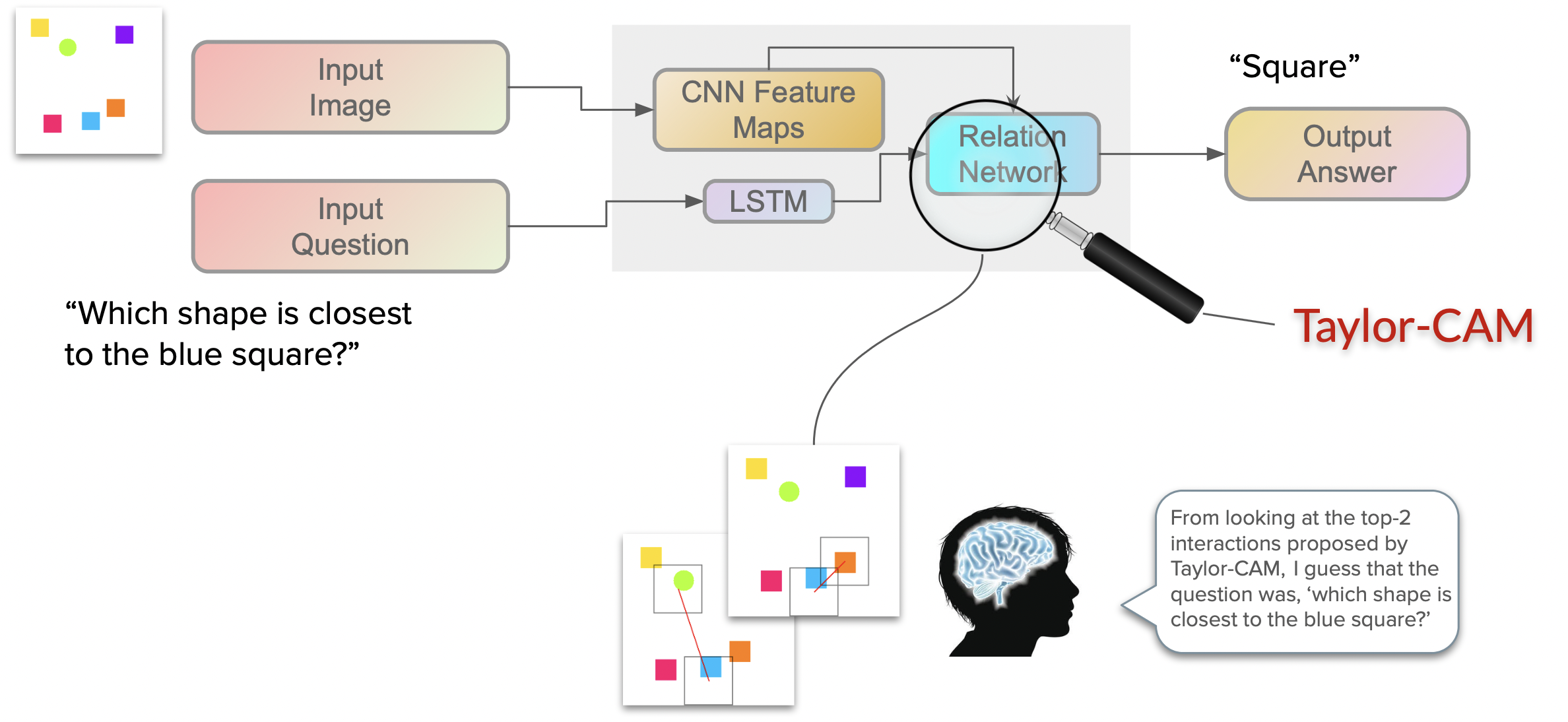}
\caption{Full Taylor-CAM pipeline on the problem of relational visual question answering. A CNN + RN take in an image and question, and output an answer. Taylor-CAM is able to visualize the model's reasoning from just its gradients such that a human can interpret what and how the model reasoned.}
\label{pipeline}
\end{figure*}

\section{COCO Multi-Object Detection}
\begin{figure*}
\centering
\begin{subfigure}{.98\textwidth}
\centering
\begin{subfigure}{0.24\textwidth}
\includegraphics[width=\linewidth, frame]{COCO_In_Depth/0/0.png}
\end{subfigure}
\hfill
\begin{subfigure}{0.24\textwidth}
\includegraphics[width=\linewidth, frame]{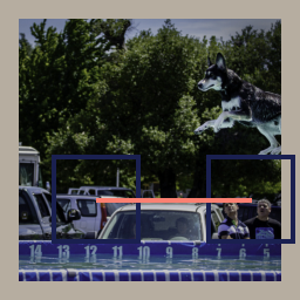}
\end{subfigure}
\hfill
\begin{subfigure}{0.24\textwidth}
\includegraphics[width=\linewidth, frame]{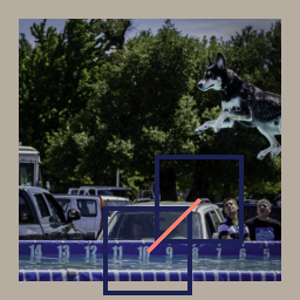}
\end{subfigure}
\hfill
\begin{subfigure}{0.24\textwidth}
\includegraphics[width=\linewidth, frame]{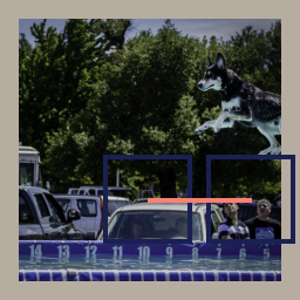}
\end{subfigure}
\vfill
\medskip
\begin{subfigure}{0.24\textwidth}
\includegraphics[width=\linewidth, frame]{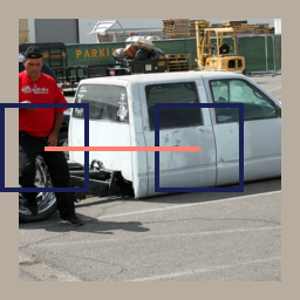}
\end{subfigure}
\hfill
\begin{subfigure}{0.24\textwidth}
\includegraphics[width=\linewidth, frame]{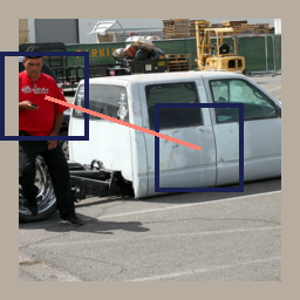}
\end{subfigure}
\hfill
\begin{subfigure}{0.24\textwidth}
\includegraphics[width=\linewidth, frame]{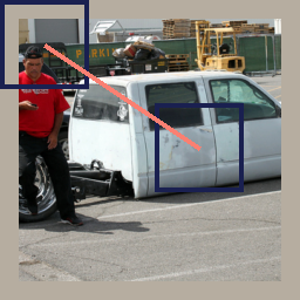}
\end{subfigure}
\hfill
\begin{subfigure}{0.24\textwidth}
\includegraphics[width=\linewidth, frame]{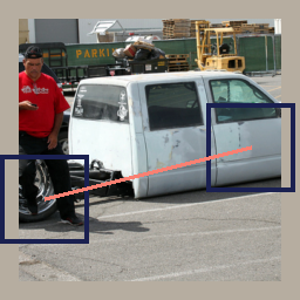}
\end{subfigure}
\vfill
\medskip
\begin{subfigure}{0.24\textwidth}
\includegraphics[width=\linewidth, frame]{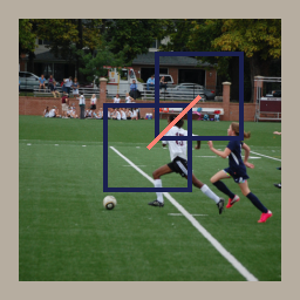}
\end{subfigure}
\hfill
\begin{subfigure}{0.24\textwidth}
\includegraphics[width=\linewidth, frame]{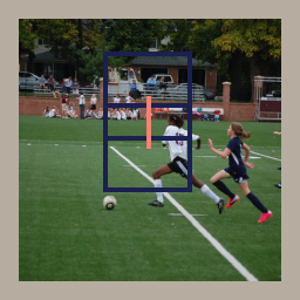}
\end{subfigure}
\hfill
\begin{subfigure}{0.24\textwidth}
\includegraphics[width=\linewidth, frame]{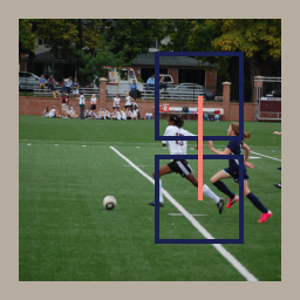}
\end{subfigure}
\hfill
\begin{subfigure}{0.24\textwidth}
\includegraphics[width=\linewidth, frame]{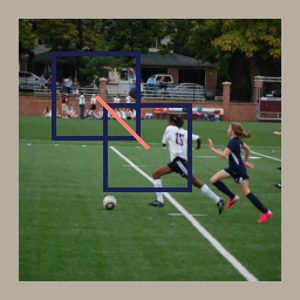}
\end{subfigure}
\vfill
\medskip
\begin{subfigure}{0.24\textwidth}
\includegraphics[width=\linewidth, frame]{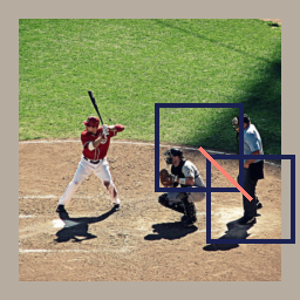}
\end{subfigure}
\hfill
\begin{subfigure}{0.24\textwidth}
\includegraphics[width=\linewidth, frame]{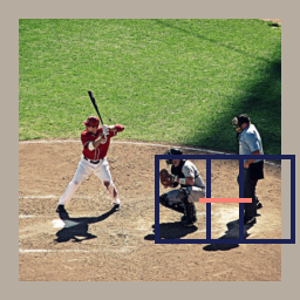}
\end{subfigure}
\hfill
\begin{subfigure}{0.24\textwidth}
\includegraphics[width=\linewidth, frame]{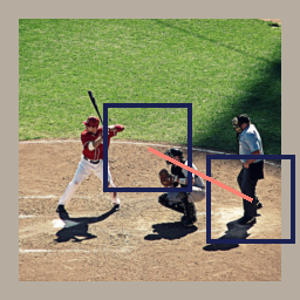}
\end{subfigure}
\hfill
\begin{subfigure}{0.24\textwidth}
\includegraphics[width=\linewidth, frame]{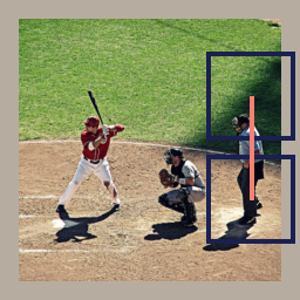}
\end{subfigure}
\vfill
\medskip
\begin{subfigure}{0.24\textwidth}
\includegraphics[width=\linewidth, frame]{COCO_In_Depth/4/0.png}
\end{subfigure}
\hfill
\begin{subfigure}{0.24\textwidth}
\includegraphics[width=\linewidth, frame]{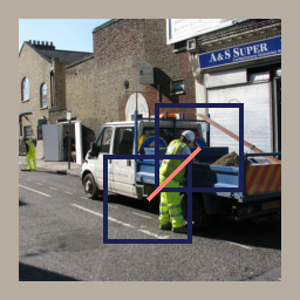}
\end{subfigure}
\hfill
\begin{subfigure}{0.24\textwidth}
\includegraphics[width=\linewidth, frame]{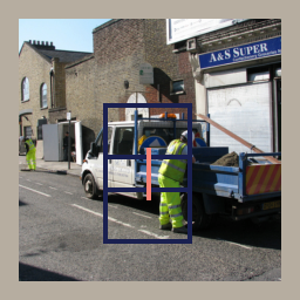}
\end{subfigure}
\hfill
\begin{subfigure}{0.24\textwidth}
\includegraphics[width=\linewidth, frame]{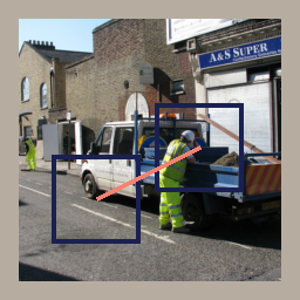}
\end{subfigure}
\end{subfigure}
\end{figure*}
    \begin{figure*}\ContinuedFloat
    \centering
    \begin{subfigure}{.92\textwidth}
\centering
\begin{subfigure}{0.24\textwidth}
\includegraphics[width=\linewidth, frame]{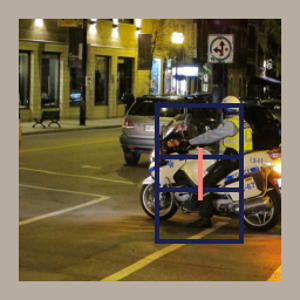}
\end{subfigure}
\hfill
\begin{subfigure}{0.24\textwidth}
\includegraphics[width=\linewidth, frame]{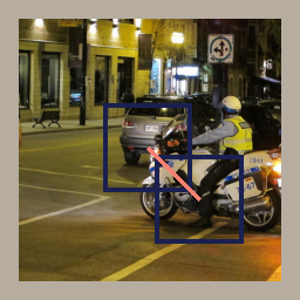}
\end{subfigure}
\hfill
\begin{subfigure}{0.24\textwidth}
\includegraphics[width=\linewidth, frame]{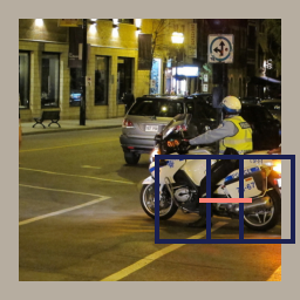}
\end{subfigure}
\hfill
\begin{subfigure}{0.24\textwidth}
\includegraphics[width=\linewidth, frame]{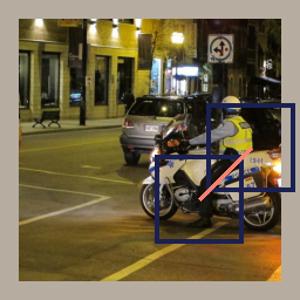}
\end{subfigure}
\vfill
\medskip
\begin{subfigure}{0.24\textwidth}
\includegraphics[width=\linewidth, frame]{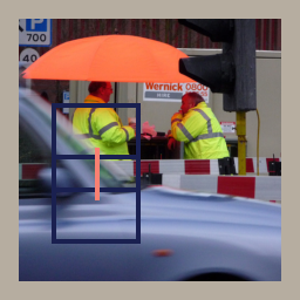}
\end{subfigure}
\hfill
\begin{subfigure}{0.24\textwidth}
\includegraphics[width=\linewidth, frame]{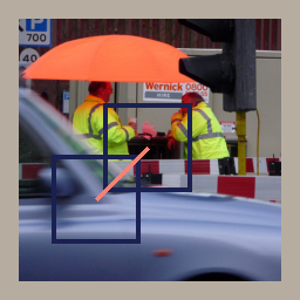}
\end{subfigure}
\hfill
\begin{subfigure}{0.24\textwidth}
\includegraphics[width=\linewidth, frame]{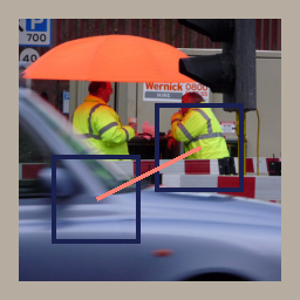}
\end{subfigure}
\hfill
\begin{subfigure}{0.24\textwidth}
\includegraphics[width=\linewidth, frame]{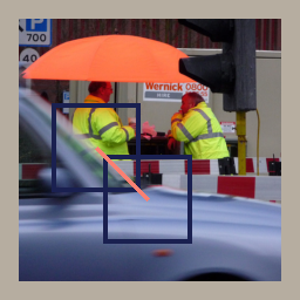}
\end{subfigure}
\vfill
\medskip
\begin{subfigure}{0.24\textwidth}
\includegraphics[width=\linewidth, frame]{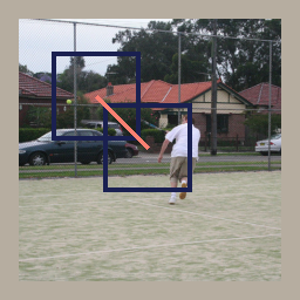}
\end{subfigure}
\hfill
\begin{subfigure}{0.24\textwidth}
\includegraphics[width=\linewidth, frame]{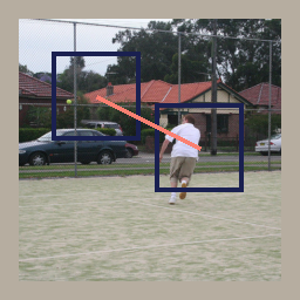}
\end{subfigure}
\hfill
\begin{subfigure}{0.24\textwidth}
\includegraphics[width=\linewidth, frame]{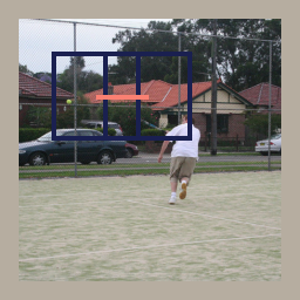}
\end{subfigure}
\hfill
\begin{subfigure}{0.24\textwidth}
\includegraphics[width=\linewidth, frame]{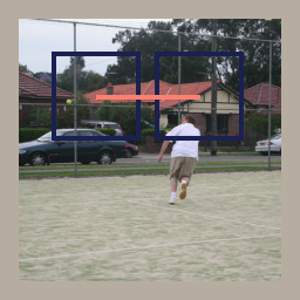}
\end{subfigure}
\vfill
\medskip
\begin{subfigure}{0.24\textwidth}
\includegraphics[width=\linewidth, frame]{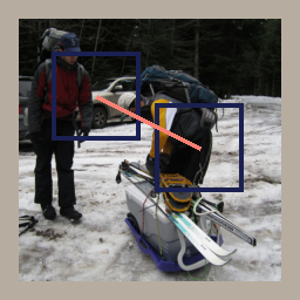}
\end{subfigure}
\hfill
\begin{subfigure}{0.24\textwidth}
\includegraphics[width=\linewidth, frame]{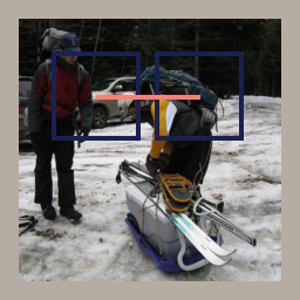}
\end{subfigure}
\hfill
\begin{subfigure}{0.24\textwidth}
\includegraphics[width=\linewidth, frame]{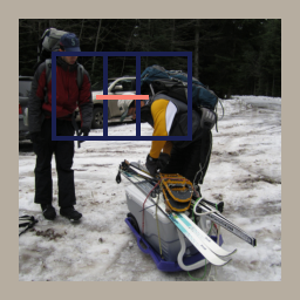}
\end{subfigure}
\hfill
\begin{subfigure}{0.24\textwidth}
\includegraphics[width=\linewidth, frame]{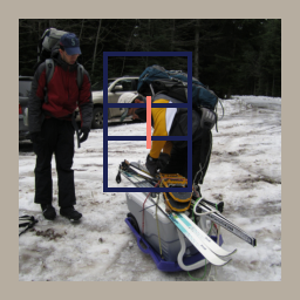}
\end{subfigure}
\vfill
\medskip
\begin{subfigure}{0.24\textwidth}
\includegraphics[width=\linewidth, frame]{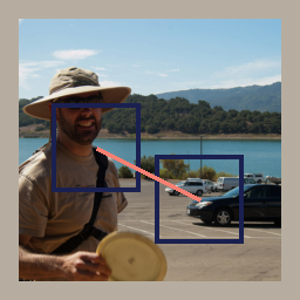}
\end{subfigure}
\hfill
\begin{subfigure}{0.24\textwidth}
\includegraphics[width=\linewidth, frame]{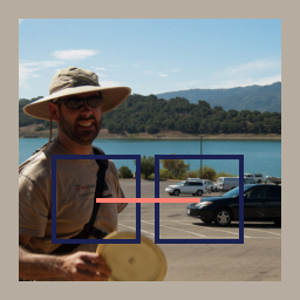}
\end{subfigure}
\hfill
\begin{subfigure}{0.24\textwidth}
\includegraphics[width=\linewidth, frame]{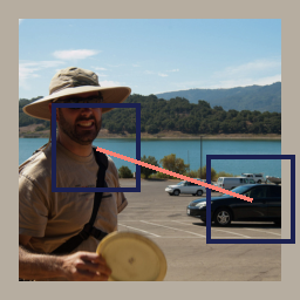}
\end{subfigure}
\hfill
\begin{subfigure}{0.24\textwidth}
\includegraphics[width=\linewidth, frame]{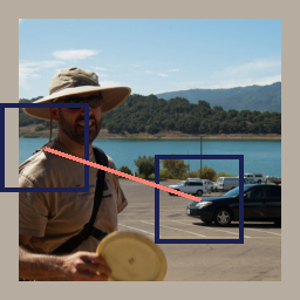}
\end{subfigure}
\end{subfigure}
\caption{Taylor-CAM interacts ``car'' and ``person'' in the custom COCO multi-object detection task. Taylor-CAM's top 4 discovered interactions per image are shown. As discussed in the main paper, in cases where the interaction is not present (as in 4th example), Taylor-CAM interacts immediately adjacent regions around whichever of the two objects is present.}
\label{objectdetection}
\end{figure*}

The task is to identify whether a pair of objects are each present in tandem. If only one is present, then the class label is negative. We tested this on the objects ``car'' and ``person'' in the COCO annotated-image dataset. We configured the frequency of the labels such that an even amount of positive and negative samples were in the training set. We found the COCO task to be somewhat inconclusive, because of model overfitting and rather low test accuracy, but still observed reasonable explanations, as seen in Figure \ref{objectdetection}. Taylor-CAM often prioritizes the car-person interaction correctly.

\section{Grad-CAM On Relational Reasoning}
\begin{figure*}
\centering
\begin{subfigure}{.92\textwidth}
\centering
\begin{subfigure}{0.45\textwidth}
\includegraphics[width=.48\linewidth, frame]{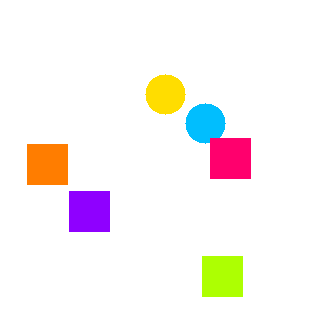}
\hfill
\includegraphics[width=.48\linewidth, frame]{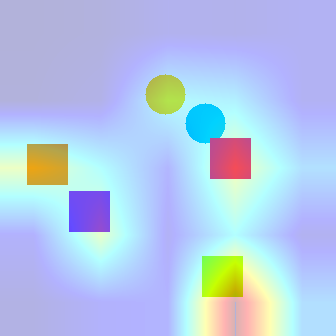}
\caption{Q: ``How many objects have shape of purple object?''}
\end{subfigure}
\hfill
\begin{subfigure}{0.45\textwidth}
\includegraphics[width=.48\linewidth, frame]{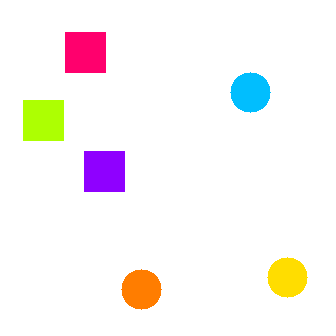}
\hfill
\includegraphics[width=.48\linewidth, frame]{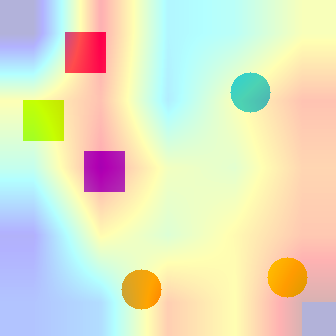}
\caption{Q: ```Which shape is furthest from the pink square?''}
\end{subfigure}
\vfill
\medskip
\begin{subfigure}{0.45\textwidth}
\includegraphics[width=.48\linewidth, frame]{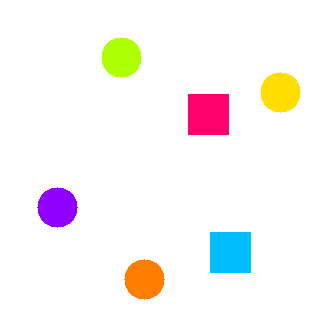}
\hfill
\includegraphics[width=.48\linewidth, frame]{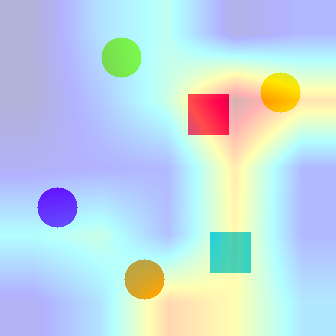}
\caption{Q: ``How many objects have shape of orange object?''}
\end{subfigure}
\hfill
\begin{subfigure}{0.45\textwidth}
\includegraphics[width=.48\linewidth, frame]{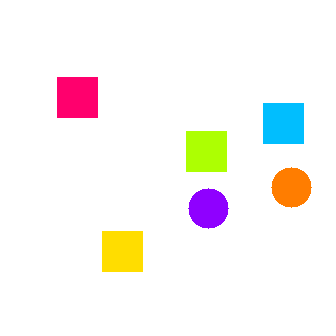}
\hfill
\includegraphics[width=.48\linewidth, frame]{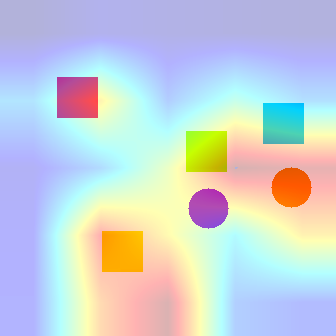}
\caption{Q: ``Which shape is closest to the yellow square?''}
\end{subfigure}
\vfill
\medskip
\begin{subfigure}{0.45\textwidth}
\includegraphics[width=.48\linewidth, frame]{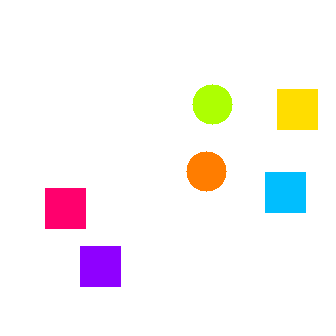}
\hfill
\includegraphics[width=.48\linewidth, frame]{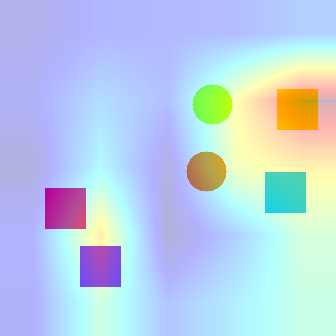}
\caption{Q: ``Which shape is closest to the purple square?''}
\end{subfigure}
\hfill
\begin{subfigure}{0.45\textwidth}
\includegraphics[width=.48\linewidth, frame]{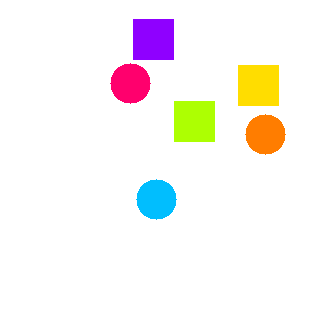}
\hfill
\includegraphics[width=.48\linewidth, frame]{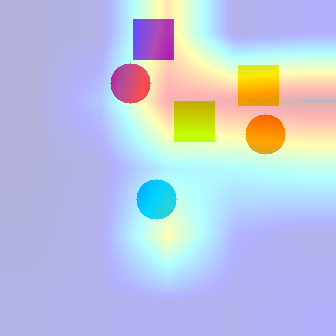}
\caption{Q: ``How many objects have shape of pink object?''}
\end{subfigure}
\vfill
\medskip
\begin{subfigure}{0.45\textwidth}
\includegraphics[width=.48\linewidth, frame]{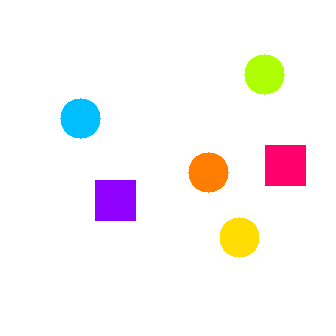}
\hfill
\includegraphics[width=.48\linewidth, frame]{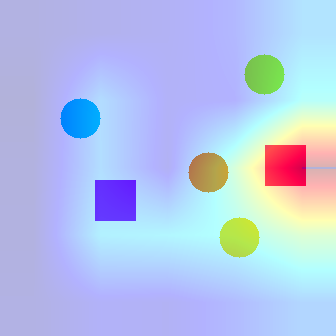}
\caption{Q: ``Which shape is furthest from the green circle?''}
\end{subfigure}
\hfill
\begin{subfigure}{0.45\textwidth}
\includegraphics[width=.48\linewidth, frame]{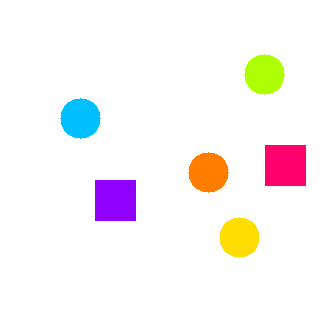}
\hfill
\includegraphics[width=.48\linewidth, frame]{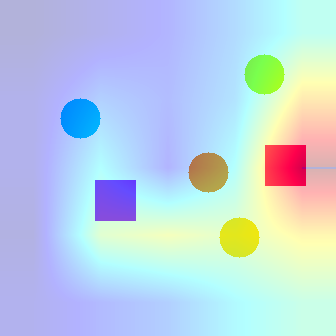}
\caption{Q: ``Which shape is closest to the blue circle?''}
\end{subfigure}
\vfill
\medskip
\begin{subfigure}{0.45\textwidth}
\includegraphics[width=.48\linewidth, frame]{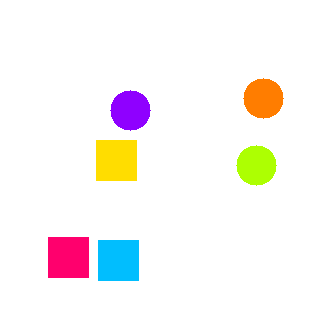}
\hfill
\includegraphics[width=.48\linewidth, frame]{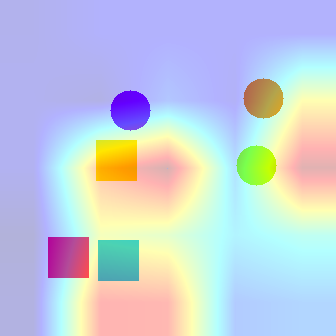}
\caption{Q: ``Which shape is furthest from the green circle?''}
\end{subfigure}
\hfill
\begin{subfigure}{0.45\textwidth}
\includegraphics[width=.48\linewidth, frame]{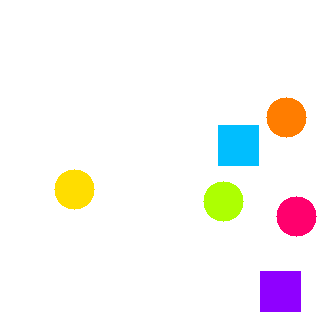}
\hfill
\includegraphics[width=.48\linewidth, frame]{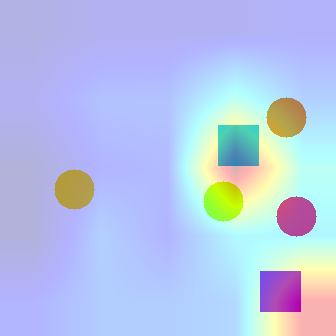}
\caption{Q: ``Which shape is furthest from the pink circle?''}
\end{subfigure}
\end{subfigure}
\caption{Grad-CAM's salience heatmaps per image/question. \textit{Left}, the raw image. \textit{Right}, Grad-CAM's explanatory heatmap.}
\label{gradcam}
\end{figure*}

Grad-CAM is a first-order explanatory tool that ranks different image regions and produces a heatmap of saliences. As shown qualitatively in Figure \ref{gradcam}, Grad-CAM's heatmaps are much harder to interpret and to reverse engineer questions and objects from compared to the results obtained from Taylor-CAM, shown in Figure 3 of the main paper, as corroborated quantitatively with our human study. 

\section{Interactional Relation Network (IRN)}

A standard RN pools a set of feature vectors $O = \{o_1, ..., o_n\}$, their corresponding positional encodings $C = \{c_1, ..., c_n\}$, and a question $q$ as follows:

\begin{equation}
\text{RN}(O, C, q) = f_{\phi}\Bigg(\sum\limits_{i, j} g_{\theta}(o_i, o_j, c_i, c_j, q)\Bigg),
\end{equation}

where $f$ and $g$ are modeled by neural networks parameterized by $\phi$ and $\theta$ respectively.

We observed through Taylor-CAM that many of the top interactions in the RN's reasoning were between individual regions and themselves, even when we zeroed out diagonals, such as in Figure \ref{selfinteracting} of the main paper. We found that we could mitigate this by making a simple modification to the RN architecture which we found to yield better test accuracy:

\begin{multline}
\text{IRN}(O, C, q) =\\ f_{\phi}\Bigg(\sum\limits_{i, j} g_{\theta}(h_{\psi}(o_i, c_i, q), h_{\psi}(o_j, c_j, q), c_i, c_j, q)\Bigg),
\end{multline}

where $h$ is an MLP parameterized by $\psi$.

We refer to this as Interactional Relation Network (IRN) since it explicitly separates within its architecture the concerns of reasoning about interactions from reasoning about individual objects. While IRN does not relate to Taylor-CAM directly, it does highlight how visualizing a network's relational reasoning can inspire potential ideas for improvement to a network's architecture. 

\section{Taylor-CAM Pipeline}

In Figure \ref{pipeline}, we illustrate the full pipeline of Taylor-CAM. A model consisting of a CNN and Relation Network predicts answers based on images and questions. Taylor-CAM intercepts the model's gradients, reverse engineers the question asked, and visualizes for a human observer. 

Given three possible question categories (\textit{closest}, \textit{furthest}, and \textit{same shape}), the user is able to interpret which the model is reasoning about by looking at Taylor-CAM's proposed interactions, as shown quantitatively in Table \ref{questions} of the main paper.

\section{Biomedical Analysis}

\begin{figure*}
  \centering
\begin{minipage}{\textwidth}
\centering
\begin{subfigure}{0.48\textwidth}
\includegraphics[width=\linewidth]{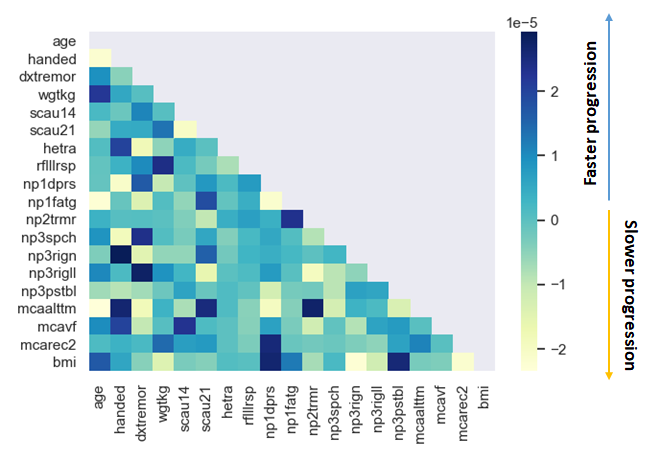}
  \caption{Top $2$-way interactions for MoCA fast progression} 
\end{subfigure}
\hfill
\begin{subfigure}{0.48\textwidth}
\begin{center}
        \setlength{\tabcolsep}{2pt}
        \resizebox{\textwidth}{!}{
        \begin{tabular}{lc}
        \hline $N$-Way Interaction & Strength \\
        \hline id\_num, scau20, mcarec4 & 4.77E-06 \\id\_num, drmagrac, mcarec4 & 4.69E-06 \\educyrs, np1apat, bmi & 4.56E-06 \\np1dprs, np2walk, np3pstbl & 4.27E-06 \\
        \hline scau13, np1slpn, np1cnst, nhy & 6.00E-07 \\scau11, scau13, scau20, bmi & 5.66E-07 \\scau11, scau13, np1slpn, nhy & 5.64E-07\\ scau13, scau20, np1urin, nhy & 5.43E-07 \\
        \hline slplmbmv, np1dprs, np2walk, np3rigru, np3pstbl & 1.23E-07 \\slplmbmv, np1dprs, np2walk, np3rign, np3pstbl & 1.22E-07 \\scau5, np1dprs, np2walk, np3rigru, np3pstbl & 1.19E-07 \\slplmbmv, np1dprs, np2walk, np3pstbl, mcarec2 & 1.18E-07 \\
        \hline
        \end{tabular}
        }
        \end{center}
  \caption{Top $3$-way, $4$-way, and $5$-way interactions for MoCA fast progression} 
  \vfill
\end{subfigure}
    \end{minipage}
    \caption{Interaction effects for classifying fast clinical progression of MoCA scores from baseline}
    \label{bio}
\end{figure*}

\begin{figure*}
  \centering
\begin{minipage}{\textwidth}
\centering
\begin{subfigure}{0.48\textwidth}
\includegraphics[width=\linewidth]{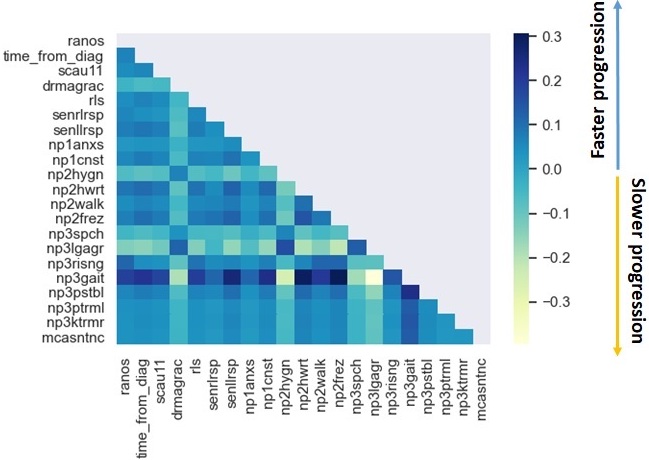}
   \caption{Top $2$-way interactions for uMCA fast progression} 
\end{subfigure}
\hfill
\begin{subfigure}{0.48\textwidth}
\begin{center}
        \setlength{\tabcolsep}{2pt}
        \resizebox{\textwidth}{!}{
        \begin{tabular}{lc}
        \hline $N$-Way Interaction & Strength \\
        \hline scau1, np3lgagr, np3risng & 1.97E+00 \\scau1, scau9, np3lgagr & 1.41E+00 \\scau1, np2hwrt, np3lgagr & 1.31E+00 \\scau1, senllrsp, np3lgagr & 1.19E+00 \\\hline time\_from\_diag, scau1, np2frez, np3lgagr & 5.39E+00 \\scau1, np2walk, np2frez, np3lgagr & 5.28E+00 \\ranos, scau1, np2frez, np3lgagr & 5.25E+00 \\scau1, rls, np2frez, np3lgagr & 5.21E+00 \\\hline scau1, np3lgagr, np3risng, np3gait, np3rtarl & 1.93E+01 \\scau1, np2hygn, np3lgagr, np3risng, np3gait & 1.80E+01 \\scau1, mslarsp, np3lgagr, np3risng, np3gait & 1.68E+01 \\dxrigid, scau1, np3lgagr, np3risng, np3gait & 1.47E+01 \\\hline
        \end{tabular}
        }
        \end{center}
\caption{Top $3$-way, $4$-way, and $5$-way interactions for uMCA fast progression} 
\end{subfigure}
    \end{minipage}
    \caption{Interaction effects for classifying fast clinical progression of uMCA scores from baseline}
    \label{bio2}
\end{figure*}

We applied these techniques to the Parkinson’s Progression Marker Initiative (PPMI) study (\url{http://www.ppmi-info.org/}) dataset, which follows persons living with early-stage Parkinson’s disease for up to approximately eight years collecting clinical and biological data from participants. Parkinson’s disease (PD) is a neurodegenerative progressive disease, characterized clinically by motor (\textit{e.g.}, tremor, rigidity) and non-motor (\textit{e.g.}, cognition and autonomic dysfunction) symptoms that vary over time within and between patients. Progression of motor and non-motor symptoms are likely not independent of each other. Instead, collateral damage may be inflicted multilaterally with non-motor and motor pathological features progressing interdependently. As an example, depressive symptoms in Parkinson’s disease are common and may perpetuate motor and cognitive deficits, which could impact function, and ultimately diminish quality of life. Therefore, it is necessary to take as comprehensive of an approach as possible in unraveling the clinical progression of Parkinson’s disease. 

As PD progresses, cognitive impairment leading to dementia may affect up to 80\% of patients, ultimately impairing one’s functional independence. Within the PPMI study, we tested $2$-, $3$-, $4$- and $5$-way interactions to understand multivariable features at baseline that distinguish patients with a more severe progression in decline of cognitive function (“fast progressors”) compared to those with a more benign course of cognitive changes, as measured by the MontrealCognitive Assessment (MoCA) scale. Top $2$-way interaction effects identified (Figure \ref{bio}) among “fast progressors” included feature interactions between handedness (handed) and severity of rigidity in the neck (np3rign); presence of resting tremor at disease diagnosis (dxtremor) and severity of rigidity in the lower extremities (np3rigll); and, severity of tremor (np2trmr) and alternating trail making test from the MoCA scale (mcaalttm) -- which ultimately is a measure of processing speed, mental flexibility, ability to sequence, and visuo-motor skills. Each of these features individually have some established associations with cognitive dysfunction or neuropsychological disorders; however, their interactions together have not been previously considered. For example, handedness, has been significantly associated with functional connectivity between language networks, as well as specific genetic loci implicated in the pathogenesis of neurologic disorders including Parkinson’s disease \cite{bio1}. More severe rigidity symptoms in Parkinson’s disease are also associated with faster cognitive decline \cite{bio2}.  Our analysis, for the first time, suggests that measures of both handedness and rigidity severity together are important to consider when predicting faster cognitive progression in Parkinson’s disease. As shown in Figure \ref{bio}, we provide $3$-, $4$-, and $5$-way interactions between features.

When broadening the interactions to 3-, 4-, and 5-way interactions between features that predict fast cognitive decline, we observe additional features with some consistency (Figure \ref{bio2} provides 3-, 4-, and 5-way interactions between features). Broadly speaking, some of the most important interactions occurred between symptoms of autonomic dysfunction: urinary (np1urin, scau11, scau13) and constipation issues (np1cnst, scau5), problems tolerating cold/heat [scau20]); mood and sleep disturbances: depression (np1dprs), apathy (np1apat), and restless sleep (np1slpn, slplmbmv); postural instability and balance issues (np2walk, np3pstbl); overall severity of Parkinson’s disease (nhy); and, memory impairment: delayed recall (mcarec2, mcarec4). Each of these symptoms, singularly, have been thought to be associated with cognitive impairment \cite{bio3, bio4, bio5, bio6}. It is novel, yet biologically plausible to consider these symptoms interacting, as the neuropathology underlying Parkinson’s disease involves multiple areas of the brain and nervous system beyond the nigrostriatal dopamine pathway. For instance, Lewy Body pathology affects the limbic cortex and frontal neocortical areas, sympathetic ganglia and even the peripheral autonomic nervous system including the myenteric plexus \cite{bio7}. 

We also performed our analysis to predict fast progression of ambulatory impairment which stems from worsening progression of motor symptoms and is a major source of disability for patients with Parkinson’s disease. Severity of ambulatory impairment was measured by an ambulatory capacity score derived from sum of scores of the MDS-UPDRS items 2.13 (freezing), 2.12 (walking and balance), 3.10 (gait), 3.12 (postural stability), and 3.11 (freezing of gait). The top 2-way interaction among “fast progressors” of ambulatory capacity was between severity of freezing (np2frez) and gait (np3gait), which is unsurprising as both are components of the ambulatory capacity scale score. Interestingly, however, the next top 2-way interactions were between handwriting (np2hwrt) and gait (np3gait); and, sensory of legs (senllrsp) and gait (np3gait).  Worsening of handwriting is often reported as an initial symptom of Parkinson’s disease and is reported to be more problematic in people with Parkinson’s disease who experience freezing of gait \cite{bio8, bio9}. Periphery sensory defects in the lower limbs are also commonly noted in people with Parkinson’s disease, and could be a main contributor to balance control issues and postural instability \cite{bio10}. As interactions were expanded to 3-, 4-, and 5-ways, other items that were consistently identified were difficulty in swallowing and chewing (scau1), and leg agility (np3lgagr). While these items are not often considered as obvious predictors of ambulatory capacity by themselves, their interactions with some more apparent features (e.g., gait, freezing, arising from chair) provide new insights on how symptoms in Parkinson’s disease patients contribute to disease progression. 


\end{document}